\definecolor{linkblue}{rgb}{0.1,0.1,0.8}
\newcommand{\assign}{\leftarrow}
\newtheorem{theorem}{Theorem}
\newtheorem{lemma}[theorem]{Lemma}
\newtheorem{corollary}[theorem]{Corollary}
\newtheorem{remark}[theorem]{Remark}
\newcommand{\N}{\mathbb{N}}
\newcommand{\R}{\mathbb{R}}
\newcommand{\A}{\mathcal{A}}
\newcommand{\C}{\mathcal{C}}
\newcommand{\F}{\mathcal{F}}
\newcommand{\calE}{\mathcal{E}}
\renewcommand{\epsilon}{\varepsilon}
\newcommand{\eps}{\varepsilon}
\DeclareMathOperator{\E}{E}
\newcommand{\oea}{$(1 + 1)$~EA\xspace}
\newcommand{\onemax}{\textsc{OneMax}\xspace}
\newcommand{\OM}{\textsc{Om}\xspace}
\begin{document}

\title{OneMax in Black-Box Models with Several Restrictions}
\author{
Carola Doerr$^{1}$, 
Johannes Lengler$^2$}
\date{
$^1$CNRS and Sorbonne Universit\'es, UPMC Univ Paris 06, CNRS, LIP6 UMR 7606, 4 place Jussieu, 75005 Paris, France\\
$^2$Institute for Theoretical Computer Science, 
	ETH Z{\"u}rich, 
  Z{\"u}rich, Switzerland 
\\[2ex]
\today
}
\maketitle

\begin{abstract}
Black-box complexity studies lower bounds for the efficiency of general-purpose black-box optimization algorithms such as evolutionary algorithms and other search heuristics. Different models exist, each one being designed to analyze a different aspect of typical heuristics such as the memory size or the variation operators in use.
While most of the previous works focus on one particular such aspect, we consider in this work how the combination of several algorithmic restrictions influence the black-box complexity. Our testbed are so-called \textsc{OneMax} functions, a classical set of test functions that is intimately related to classic coin-weighing problems and to the board game Mastermind.

We analyze in particular the combined memory-restricted ranking-based black-box complexity of \textsc{OneMax} for different memory sizes. While its isolated memory-restricted as well as its ranking-based black-box complexity for bit strings of length $n$ is only of order $n/\log n$, the combined model does not allow for algorithms being faster than linear in $n$, as can be seen by standard information-theoretic considerations. We show that this linear bound is indeed asymptotically tight. Similar results are obtained for other memory- and offspring-sizes. Our results also apply to the (Monte Carlo) complexity of \textsc{OneMax} in the recently introduced elitist model, in which only the best-so-far solution can be kept in the memory. Finally, we also provide improved lower bounds for the complexity of \textsc{OneMax} in the regarded models.

Our result enlivens the quest for natural evolutionary algorithms optimizing \textsc{OneMax} in $o(n \log n)$ iterations.
\end{abstract}

\sloppy{
\section{Introduction}
\label{sec:Intro}

Black-box complexity aims at analyzing the influence of algorithmic choices such as the population size, the variation operators in use, or the selection principles on the optimization time of evolutionary algorithms (EAs) and other (deterministic or randomized) search heuristics. Lower bounds from black-box complexity theory provide information about the limits of certain classes of evolutionary algorithms (e.g., memory-restricted, ranking-based, or unbiased EAs), while upper bounds can serve as an inspiration for the development of new EAs.

Unlike other complexity notions, black-box complexity is a measure for the number of \emph{black-box queries} that an algorithm does in order to optimize an unknown function $f$. That is, we simply count the number of function evaluations $f(x)$ that are needed (usually in expectation, but cf.~Section~\ref{sec:lasvegas} below) until for the first time an optimal search point $x \in \arg\max f$ is evaluated. Black-box complexity typically disregards all computational efforts that an algorithm executes between any two different such function evaluations. In ``classical'' theoretical computer science (TCS) black-box complexity is often referred to as \emph{(randomized) query complexity.} While being a well-known complexity notion in there, the focus in the broader TCS community is typically on having a simplified complexity measure for sorting, coin-weighing, and other problems, and not, as is the case in evolutionary computation, on analyzing the impact of above-mentioned algorithmic choices on the performance of general-purpose problem solvers. 

\subsection{Related Work on Black-Box Complexity}

In the context of evolutionary computation (EC), black-box complexity has first been studied by Droste, Jansen, (Tinnefeld,) and Wegener in~\cite{DrosteJTW03} and~\cite{DrosteJW06}. The authors regard two different black-box models, an \emph{unrestricted} version, in which the algorithms have arbitrary memory and full access to function values, and a \emph{memory-restricted} one, in which the algorithms are allowed to store only a limited number of previously queried search points and their function values. While the unrestricted model is mostly used for analyzing lower bounds, the memory-restricted model is studied in the context of upper bounds. Since most EAs have a limited population size, these upper bounds typically provide a better comparison for the efficiency of different algorithmic approaches. 

The theory seemed to have come to an early end afterward since even the memory-restricted version yielded black-box complexities that were unreasonably low compared to the performance of evolutionary algorithms. Thus, the notion appeared to be of little use for the understanding of such algorithms. However, the field experienced a major revival with the work of Lehre and Witt~\cite{LehreW10, LehreW12} on the \emph{unbiased black-box model.} In this version, new search points can be obtained by the algorithm only by sampling uniformly at random from the underlying search space, or (for the search space being the $n$-dimensional hypercube $\{0,1\}^n$) by combining previously queried search points in a way that does not discriminate between the bit positions $1,2,\ldots, n$ nor between the bit values $0$ and $1$. Many EAs use \emph{variation operators} of this unbiased type. 

Lehre and Witt could show that their unbiased black-box complexity notion can give much better estimates for the efficiency of typical EAs than the previous models. This also applies to the so-called \onemax problem, whose unrestricted black-box complexity is only of order $n/\! \log n$~\cite{ErdR63, DrosteJW06, AnilW09} while its unary unbiased black-box complexity is of order $n \log n$~\cite[Theorem 6]{LehreW12}, thus matching the expected optimization time of search heuristics such as the so-called \oea and Randomized Local Search. The (generalized) \onemax problem is to identify an unknown bit string $z$ if with each each query $x$ the algorithm learns the number $\OM_z(x):=|\{ i \in \{1,2,\ldots, n\} \mid x_i=z_i\}|$ of bit positions in which $x$ and $z$ agree (in other words, $\OM_z(x)$ equals $n$ minus the Hamming distance of $x$ and $z$). This problem can be seen as a generalization of the popular Mastermind game with two colors (cf.~\cite{DoerrW14memory}), and is one of the easiest pseudo-Boolean optimization problems as it only requires trap-free hill-climbing. As such it is typically one of the first test problems that is regarded when introducing a new black-box model.

It was left as an open question in~\cite{DrosteJW06} whether or not restricting the memory of an algorithm already yields a similar runtime bound of $\Omega(n \log n)$ for the optimization of \onemax. This hope was dashed in~\cite{DoerrW14memory}, where it has been shown that even for the smallest possible memory size, in which algorithms may store only one previously queried search point and its fitness, an $O(n/\! \log n)$ algorithm exists. 
Similarly, in the \emph{ranking-based black-box model}, in which the algorithms learn only the ranking of the function values, but not their absolute values, \onemax can still be solved in an expected number of $O(n/\! \log n)$ function evaluations~\cite{DoerrW14ranking}. 

\subsection{Our Results}

While previous work in black-box complexity theory focused on analyzing the influence of single restrictions on the efficiency of the algorithms under consideration, we regard in this work \emph{combinations} of such algorithmic constraints. As testbed, we regard the above-mentioned class of \onemax functions. Since for this problem many, often provably tight, bounds are available for the single-restriction models, we can easily compare our results to see how the combined restrictions impact the best-possible optimization times, cf. Table~\ref{tab:onemax} below. 

In a first step, we study the combined memory-restricted ranking-based model, i.e., we study the black-box complexity of \onemax with respect to $(\mu+\lambda)$ memory-restricted ranking-based algorithms. Algorithms fitting this framework are allowed to store up to $\mu$ previously queried search points and their ranking with respect to the underlying objective function $f$. (Solely) from this information, the algorithms then generate and query $\lambda$ new search points (so-called \emph{offsprings}). They receive information of how these newly generated search points perform with respect to the parent population (more precisely, the full ranking of the $\mu+\lambda$ search points with respect to $f$ is revealed to the algorithms), and the algorithms then select an arbitrary subset of $\mu$ of these search points, which form the parent population of the next iteration. This process continues until a search point $x \in \arg\max f$ is queried for the first time.    

For the most restrictive case $\mu=\lambda=1$ (i.e., the often regarded (1+1) scheme), the algorithms under consideration are easily seen to be \emph{comparison-based}, i.e., they learn with each query only whether the offspring has better, equal, or worse fitness than its parent. Therefore, by a simple information-theoretic argument (cf., e.g., \cite[Theorem 2]{DrosteJW06}), their expected optimization time on \onemax is at least linear in $n$. This already shows that the combined (1+1) memory-restricted ranking-based black-box complexity of \onemax is asymptotically larger than either the pure ranking-based or the pure memory-restricted version. However, this is not the end of the story. In this work we show lower bounds for the combined (1+1) model that are by a constant factor stronger than the best known bounds for comparison-based algorithms. Thus they are stronger than any bound obtained by reducing the combined model to an existing black-box model with a single restriction. On the other hand, we show that the mentioned linear lower bound is asymptotically tight. That is, we provide a linear time algorithm solving \onemax in a (1+1) scheme and using only relative fitness information. Also for many other combinations of $\mu$ and $\lambda$ we show that the information-theoretic lower bound is matched by a $(\mu+\lambda)$ memory-restricted ranking-based algorithm. 

In a next step, we combine the memory-restricted ranking-based model with yet another restriction, namely with the recently introduced elitist selection requirement introduced in the work~\cite{DoerrL15Model} on \emph{elitist black-box models}. In this context, we additionally require that the algorithm selects the $\mu$ \emph{fittest} individuals out of the $\mu+\lambda$ parents and offspring (where it may break ties arbitrarily).\footnote{As mentioned in~\cite{DoerrL15Model} we remark that the usage of ``elitist selection'' is not standardized in the EA literature. Some subcommunities would therefore rather call our elitist black-box model a \emph{black-box model with truncation selection}.} Notably, the achievable optimization times stays the same (asymptotically), though in a slightly different sense as we shall discuss below. This is rather surprising, as all previous black-box optimal algorithms make substantial use of non-elitist selection. 

Table~\ref{tab:onemax}, taken from~\cite{DoerrD14tutorial} and extended to cover the results of the present paper, summarizes known lower and upper bounds of the complexity of \onemax in the different black-box models. Bounds given without reference follow trivially from identical bounds in stronger models, e.g., the $\Omega(n / \log n)$ lower bound for the memory-restricted black-box complexity follows directly from the same bound for the unrestricted model. 

A short version of this work has been presented at the Genetic and Evolutionary Computation Conference (GECCO 2015) in Madrid, Spain~\cite{DoerrL15OM}.

\begin{table*}
\begin{center}
\begin{scriptsize}
\begin{tabular}{l|ll|ll}
\textbf{Model}    
& \multicolumn{2}{c|}{\textbf{Lower Bound}} 
& \multicolumn{2}{c}{\textbf{Upper Bound}}\\ \hline 
unrestricted 
	& $\Omega(n/\! \log n)$ & info.-theo.
	& $O(n/\! \log n)$ &\cite{ErdR63, AnilW09} \\
	\hline
unbiased, arity $1$
	& $\Omega(n \log n)$  &\cite{LehreW12}
	& $O(n \log n)$  &
	\\ 
unbiased, arity $2 \leq k \leq \log n$ 
	& $\Omega(n/\! \log n)$ 
	&
	& $O(n/k)$  &\cite{DoerrW14arity, DoerrDE15}
	\\ \hline
r.b. unrestricted  
	& $\Omega(n/\! \log n)$  
	&
	& $O(n/\! \log n)$  &\cite{DoerrW14ranking}
	\\ 	
r.b. unbiased, arity  $1$
	& $\Omega(n \log n)$  
	&
	& $O(n \log n)$  
	& 
	\\ 
r.b. unbiased, arity $ 2 \leq k \leq n$
	& $\Omega(n/\! \log n)$  
	&
	& $O(n /\log k)$  
	&\cite{DoerrW14ranking}
	\\
	\hline
(1+1) comparison-based
	& $\Omega(n)$  
	&info.-theo.
	& $O(n)$  
	&
	\\ \hline				
(1+1) memory-restricted  
	& $\Omega(n/\! \log n)$  
	&
	& $O(n /\log n)$  
	&\cite{DoerrW14memory}
	\\ \hline				
(1+1) elitist Las Vegas
	& $\Omega(n)$  
	& \rdelim\}{5}{1.8cm}[\normalfont Thm.~\ref{thm:lower}]
	& $O(n\log n)$  
	&
	\\ 				
(1+1) elitist $\log n/n$-Monte Carlo
	& $\Omega(n)$  
	&
	& $O(n)$  
	&Thm.~\ref{thm:11upper}
	\\ 				
(2+1) elitist Monte Carlo/Las Vegas
	& $\Omega(n)$  
	&
	& $O(n)$  
	&Thm.~\ref{thm:21upper}
	\\ 				
(1+$\lambda$) elitist Monte Carlo ($\#$ generations)	
	& $\Omega(n/\! \log \lambda)$  
	&
	& $O(n/\! \log \lambda)$ 
	&Thm.~\ref{thm:1lambdaupper}
	\\ 		
($\mu$+1) elitist Monte Carlo
	& $\Omega(n/\! \log \mu)$  
	&
	& $O(n/\! \log \mu)$   
	&Thm.~\ref{thm:mu1upper}
	\\ 		
($1,\lambda$) elitist Monte Carlo/Las Vegas ($\#$ generations)
	& $\Omega(n/\! \log \lambda)$  
	& cf.\! Section~\ref{sec:comma}
	& $O(n/\! \log \lambda)$  
	&Thm.~\ref{thm:11comma}
\end{tabular}
\end{scriptsize}
\end{center}
\caption{The black-box complexities of \onemax in the different models. r.b. abbreviates ranking-based; 
info-theo. the information-theoretic bound~\cite{Yao77}, cf. also~\cite{DrosteJW06}; for $(1+\lambda)$ and $(1,\lambda)$ we assume $1< \lambda < 2^{n^{1-\eps}}$ for some $\eps>0$, and for $(\mu+1)$ we assume that $\mu = \omega(\log^2 n/\! \log \log n)$ and $\mu \leq n$.}
\label{tab:onemax}
\end{table*}

\subsection{Relevance of Our Work and Techniques}

While at a first glance the obtained upper bounds may seem to be a shortcoming of the model (most EAs need $\Omega(n \log n)$ steps to optimize \onemax-functions), it does not have to be. In light of~\cite{DoerrDE15}, where a simple and natural EA has been designed that optimizes \onemax in $o(n \log n)$ time, it is well possible that such a result can be extended further (of particular interest is an extension to (1+1)-type algorithms). As we know from~\cite{DoerrDE15}, black-box complexity results like our mentioned \onemax bound can give an inspiration for developing such algorithms.  

One obvious challenge for designing algorithms in the combined memory-restricted ranking-based model is the fact that the best-known algorithms in the single-restriction case either make heavy use of knowing the absolute fitness values (in the memory-restricted case, see~\cite{DoerrW14memory}) or of having access to a large number of previously queried search points (in the ranking-based case, cf.~\cite{DoerrW14ranking}). It is thus not obvious how to design efficient algorithms respecting both restrictions at the same time. Our results therefore require approaches and strategies that are significantly different from those found in previous works, though, at the other hand, we can and also do make significant use of several ideas developed in previous works on \onemax in the different black-box models. For example, for the (1+1) memory-restricted ranking-based elitist black-box model the algorithm certifying the linear upper bound nicely combines previous techniques from the black-box complexity literature with some newly developed tools such as the \emph{neutral counters} designed in Section~\ref{sec:counter}.
We believe that the insights from these tools will be useful in future research in evolutionary computation, both in algorithm analysis and in algorithm design.

For the \emph{lower bounds,} a technical difficulty that we face in the proofs is a putative non-applicability of Yao's Principle. More precisely, there may be randomized algorithms that even in the worst case perform much better than any deterministic algorithm on a random problem instance, cf.\! Section~\ref{sec:Yao} and~\cite{DoerrL15Model}. We overcome these problems by expanding the class of algorithms regarded. This needs some care as we do not want to decrease the complexity too much by this expansion. 

\subsection{Structure of the Paper}
Our paper is structured as follows. We start with a formal introduction of the models in Section~\ref{sec:model}, followed by a brief discussion on the difference between Las Vegas and Monte Carlo complexities, which can be crucially different in memory-restricted models. In a nutshell, the \emph{Las Vegas} complexity measures the expected time until an optimal search point is hit, while the \emph{$p$-Monte Carlo complexity} asks for the time needed until an optimum is hit with probability at least $1-p$. These bounds can be exponentially far apart as shown in~\cite{DoerrL15Model} and thus need to be regarded separately. In Section~\ref{sec:onemaxbackground} we formally introduce the generalized \onemax functions and recapitulate the known bounds on its complexity in different black box models. We conclude the introductory sections by providing some basic tools in Section~\ref{sec:tools}.

In Section~\ref{sec:lower} we provide the mentioned lower bounds for the $(\mu+\lambda)$ memory-restricted ranking-based black-box complexity of \onemax for a wide range of $\mu$ and $\lambda$. For the upper bounds, most of our proofs work directly in the elitist black box model, so the remainder of the paper is devoted to the proofs of such upper bounds in the elitist model, which imply the same upper bounds for the memory-restricted ranking-based model. We first give a simple linear upper bound for the (2+1) (Las Vegas and Monte Carlo) elitist black-box complexity of \onemax (Section~\ref{sec:21upper}). At the heart of this paper is Section~\ref{sec:11upper}, where we show the linear upper bound for the (1+1) Monte Carlo elitist black-box complexity of \onemax. In Sections~\ref{sec:1lambdaupper} and~\ref{sec:mu1upper}, we consider more generally $(1+\lambda)$ and $(\mu+1)$ elitist black-box algorithms. Finally, in Section~\ref{sec:comma} we give some remarks on the $(\mu,\lambda)$ elitist black-box complexities of \onemax and point out some important differences from the $(\mu+\lambda)$ complexities.

\section{Black-Box Models and Complexity Measures}
\label{sec:model}

We are primarily interested in analyzing the memory-restricted ranking-based black-box complexities of \onemax. An important difference to purely memory-restricted algorithms is that the available memory is strictly smaller in this combined memory-restricted and ranking-based model. If we regard, for example, the (1+1) case, then in the purely memory-restricted model the algorithm does not only have access to the current search point, but also to its fitness value. It thus has strictly more than $n$ bits of information when sampling the offspring. If, on the other hand, the algorithm is in addition also ranking-based, then it may not access the fitness; thus its available information is restricted to exactly $n$ bits. So the fitness-based variant has effectively a larger available memory than the ranking-based one (but of course both are not completely free in how to use the memory).

Formally, a $(\mu+\lambda)$ memory-restricted, ranking-based black-box algorithm maintains a population (parent generation) of $\mu$ search points, and knows the ranking of their fitnesses. Based solely on this information it samples $\lambda$ additional search points (offsprings), and receives the ranking of all $\mu+\lambda$ fitnesses. From the parent generation and the offsprings, it selects $\mu$ search points to form the new parent generation. A $(\mu+\lambda)$ memory-restricted, ranking-based black-box algorithm is~\emph{elitist} if in the selection step it selects the $\mu$ \emph{best} search points with respect to the ranking. The algorithm may break ties arbitrarily: for example, if all $\mu+\lambda$ search points have the same fitness, then it may choose an arbitrary subset of size $\mu$ to form the next parent generation. The formal structure of a $(\mu+\lambda)$ elitist  black-box algorithm is given by Algorithm~\ref{alg:elitist}.


\begin{algorithm2e}
 \textbf{Initialization:} \\
 \Indp
 $X \assign \emptyset$\;
 \For{$i=1,\ldots,\mu$}{
 Depending only on the multiset $X$ and the ranking $\rho(X,f)$ of $X$ induced by $f$, choose a probability distribution $p^{(i)}$ over $\{0,1\}^n$ and sample $x^{(i)}$ according to $p^{(i)}$\;
 $X \assign X \cup \{ x^{(i)}\}$\;
 }
 \Indm
 \textbf{Optimization:}	
 \For{$t=1,2,3,\ldots$}{
 		Depending only on the multiset $X$ and the ranking $\rho(X,f)$ of $X$ induced by $f$
	\label{line:mut}	choose a probability distribution $p^{(t)}$ on $(\{0,1\}^n)_{i=1}^{\lambda}$ and 
		sample $(y^{(1)},\ldots,y^{(\lambda)}) $ according to $p^{(t)}$\;
		Set $X \assign X \cup \{y^{(1)},\ldots,y^{(\lambda)}\}$\;
  \lFor{$i=1,\ldots, \lambda$}{
  	\label{line:selection} Select $x \in \arg\min X$ and update $X \assign X \setminus \{x\}$}}
 \caption{The $(\mu+\lambda)$ elitist black-box algorithm for maximizing an unknown function $f:\{0,1\}^n \rightarrow \R$}
\label{alg:elitist}
\end{algorithm2e}
 
 Note that the only difference to the $(\mu+\lambda)$ memory-restricted ranking-based black-box model is the enforced elitist selection in line~\ref{line:selection}, which in the former model can be replaced by 
\begin{align*}
  \text{\textbf{for} } i=1,\ldots, \lambda \text{\textbf{ do } }
  \text{Select $x \in X$ and update $X \assign X \setminus \{x\}$;}
\end{align*}
Since the elitist model is more restrictive than the combined memory-restricted ranking-based one, every upper bound on the $(\mu+\lambda)$ elitist black-box complexity also holds for the  $(\mu+\lambda)$ memory-restricted ranking-based black-box complexity. As discussed in~\cite{DoerrL15Model} several variants of the elitist model exist, but this is beyond the scope of the present paper. 

Following the standard convention for black-box optimization, we define the \emph{runtime} (or \emph{optimization time}) of a $(\mu+\lambda)$ black-box algorithm $A$ to be the number of search points sampled by $A$ until an optimal search point is sampled for the first time (samples are counted with multiplicities if they are sampled several times). Since a $(\mu+\lambda)$ algorithm samples $\lambda$ search points in each generation, the runtime of an algorithm after $t$ generations is $\mu + \lambda t$, but see also our comment at the end of Section~\ref{sec:lasvegas}. 

\subsection{Las Vegas vs. Monte Carlo Complexities}
\label{sec:lasvegas}

Elitist black-box algorithms cannot do simple restarts since a solution intended for a restart is not allowed to be accepted into the population if its fitness is not as good as those of the search points currently in the memory. Regarding expected runtimes can therefore be significantly different from regarding algorithms with allowed positive failure probability. In fact, it is not difficult to see that these two notions can be exponentially far apart~\cite[Theorem 3]{DoerrL15Model}. One may argue that this is a rather artificial problem since in practice there is no reason why one would not want to allow restarts. Also, almost all algorithms used to show upper bounds in the previous black-box models have small complexity only because of the possibility of doing random restarts. One convenient way around this problem is to allow for small probabilities of failure. Such (high) probability statements are actually often found in the evolutionary computation literature. The following definition captures its spirit.

Let us regard for a black-box algorithm $A$ the smallest number $T$ of function evaluations that is needed such that for any problem instance $f \in \F$ the optimum of $f$ is found with probability at least $1-p$. We call $T=T(A,\F)$ the $p$-Monte Carlo black-box complexity of $A$ on $\F$. The \emph{$p$-Monte Carlo black-box complexity} of $\F$ with respect to a class $\A$ of algorithms is $\min_{A \in \A}T(A,\F)$. If we make a statement about the Monte Carlo complexity without specifying $p$, then we mean that for \emph{every constant} $p>0$ the statement holds for the $p$-Monte Carlo complexity. However, we sometimes also regard $p$-Monte Carlo complexities for non-constant $p=p(n) = o(1)$, thus yielding high probability statements. 

The standard black-box complexity (which regards the maximal \emph{expected} time that an algorithm $A$ needs to optimize any $f \in \F$) is called \emph{Las Vegas black-box complexity} in~\cite{DoerrL15Model}. We adopt this notation. 

We recall from~\cite{DoerrL15Model} that, by Markov's inequality, every Las Vegas algorithm is also (up to a factor of $1/p$ in the runtime) a $p$-Monte Carlo algorithm. We also repeat the following statement which is a convenient tool to bound $p$-Monte Carlo complexities.

\begin{remark}[Remark 1 in~\cite{DoerrL15Model}]
\label{rem:1}
Let $p \in (0,1)$. Assume that there is an event $\calE$ of probability $p_{\calE} < p$ such that conditioned on $\neg \calE$ the algorithm $A$ finds the optimum after \emph{expected} time at most $T$. Then the $p$-Monte Carlo complexity of $A$ on $f$ is at most $(p-p_\calE)^{-1}T$. In particular, if $p-p_\calE = \Omega(1)$ then the $p$-Monte Carlo complexity is $O(T)$.
\end{remark}

For some applications it is more natural to count the number of generations rather than the number of sampled search points (e.g., because the evaluations of different search points may be parallelizable). For this reason, we give some complexities also for the number of generations, cf.~Table~\ref{tab:onemax}. All definitions above transfer analogously, with the runtime of an algorithm replaced by the number of generations needed before an optimal search point is sampled for the first time. However, note that all black-box complexities refer to the expected runtime unless explicitly stated otherwise.

\section{Background on OneMax Complexities and Overview of Results}
\label{sec:onemaxbackground}

One of the most prominent problems in the theory of randomized search heuristics is the running time of evolutionary algorithms and other heuristics on the \onemax problem. \onemax is the function that counts the number of ones in a bitstring. Maximizing \onemax thus corresponds to finding the all-ones string. 

Search heuristics are typically invariant with respect to the problem encoding, and as such they have the same runtime for any function from the generalized \onemax function class 
\begin{align*}
\onemax:=\left\{\OM_z \mid z \in \{0,1\}^n \right\},
\end{align*}
where $\OM_z$ is the function
\begin{align}
\label{def:OMz}
\OM_z:\{0,1\}^n \rightarrow \R, x \mapsto n-\sum_{i=1}^n{(x_i \oplus z_i)},
\end{align} 
assigning to $x$ the number of positions in which $x$ and $z$ agree. We call $z$, the unique global optimum of function $\OM_z$, the \emph{target string} of $\OM_z$. 
Whenever we speak of the \onemax problem or a \onemax function we mean the whole class of \onemax functions or an unknown member of it, respectively. 
 
The \onemax problem is by far the most intensively studied problem in the runtime analysis literature and, due to its close relation to the classic board game \emph{Mastermind}~\cite{DoerrW14memory}, to cryptographic applications, and to coin-weighing problems, it is also studied in other areas of theoretical computer science. 
Also for black-box complexities it is the most commonly found test problem. Without going too much into detail, we recall that the unrestricted black-box complexity of \onemax is $\Theta(n/\! \log n)$~\cite{DrosteJW06,AnilW09,ErdR63}. While the lower bound is a simple application of Yao's Principle (Lemma~\ref{lem:Yao}, cf.~\cite{DrosteJW06} for a detailed explanation of the $\Omega(n/\! \log n)$ lower bound), the upper bound is achieved by an extremely simple, yet elegant algorithm: sampling $O(n/\! \log n)$ random search points and regarding their fitness values, with high probability, reveals the target string $z$. We shall make use of (variants of) this strategy in some of our proofs of upper bounds.
 
Another important bound for the \onemax problem is the simple $\Theta(n)$ bound for comparison-based algorithms as introduced in~\cite{TeytaudG06}.\footnote{The lower bound is again a simple application of Yao's Principle (Lemma~\ref{lem:Yao}), while the upper bound is attained, for example, by the algorithm which checks one bit at a time, going through the bitstring from one end to the other. Alternatively, the upper bound is also verified by the $(1+(\lambda,\lambda))$~GA of~\cite{DoerrDE15}, thus showing that it can also be achieved by unbiased algorithms of arity two.} Since (1+1) memory-restricted ranking-based algorithms are comparison-based, this gives a linear lower bound for their complexity on \onemax. 
\begin{remark}
\label{rem:OMlowercb}
The (1+1) memory-restricted ranking-based black-box complexity of \onemax is $\Omega(n)$, thus implying a linear lower bound for the (1+1) elitist Las Vegas and Monte Carlo black-box complexity of \onemax.
\end{remark}
If we consider the leading constants hidden in the $\Omega$-notation, then the lower bounds coming from the comparison-based complexity are not optimal. In Theorem~\ref{thm:lower} we will prove lower bounds for memory-restricted ranking-based algorithms that are by a non-trivial constant factor higher than the best known bounds for comparison-based algorithms.

Our upper bounds will show that there are elitist black-box optimization algorithms optimizing \onemax much more efficiently than typical heuristics like RLS or evolutionary algorithms. In particular we show that the (1+1) elitist Monte Carlo black-box complexity is at most linear (which is best possible by Theorem~\ref{thm:lower}). Our results  are summarized in the lower part of Table~\ref{tab:onemax}. Note that the upper bounds for elitist algorithms immediately imply upper bounds for the (Monte Carlo and Las Vegas) black-box complexity of \onemax in the respective memory-restricted ranking-based models. The lower bounds also carry over in asymptotic terms (i.e., up to constant factors), cf.\! Theorem~\ref{thm:lower}. Since the memory-restricted ranking-based bounds were the original motivation for our work, we collect them in the following statement. 

\begin{corollary}
\label{cor:mrrb}
The (1+1) memory-restricted ranking-based (Las Vegas) black-box complexity of  \onemax is $\Theta(n)$. 
For $1<\lambda < 2^{n^{1-\eps}}$, $\eps>0$ being an arbitrary constant, its $(1+\lambda)$ memory-restricted ranking-based black-box complexity is $\Theta(n/\! \log \lambda)$ (in terms of generations), 
while for $\mu=\omega(\log^2(n)/\log\log n)$ its $(\mu+1)$ memory-restricted ranking-based black-box complexity is $\Theta(n/\! \log \mu)$.\footnote{We do not consider in this work $(\mu+\lambda)$ elitist algorithms for $\mu$ and $\lambda$ both being strictly greater than one. We feel that the required tools are given in the $(1+\lambda)$ and $(\mu+1)$ settings, so that analyzing the additional settings would not give sufficiently many new insights.}
\end{corollary}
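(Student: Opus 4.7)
The plan is to derive the corollary as a bookkeeping of results proved elsewhere in the paper, combined with the elementary observation that a less restrictive black-box model cannot have a larger complexity. There are three parameter regimes, each requiring a matching upper and lower bound, and in each case both halves are (essentially) in hand once I cite the right theorem.

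For the upper bounds I would use the fact that the elitist $(\mu+\lambda)$ model differs from the memory-restricted ranking-based $(\mu+\lambda)$ model only by insisting that selection retain the $\mu$ fittest individuals (ties broken arbitrarily, as noted right after Algorithm~\ref{alg:elitist}). Hence every elitist upper bound is automatically a memory-restricted ranking-based upper bound. Moreover, because the non-elitist model permits arbitrary selection, a Monte Carlo elitist statement with constant success probability lifts to a Las Vegas memory-restricted ranking-based statement of the same asymptotic order via independent restarts: sampling a fresh uniform candidate and overwriting the stuck parent is a legitimate move in the non-elitist setting. Applying this transfer to Theorems~\ref{thm:11upper}, \ref{thm:1lambdaupper}, and~\ref{thm:mu1upper} yields the upper bounds $O(n)$, $O(n/\log \lambda)$ (in generations), and $O(n/\log \mu)$, respectively, in the stated parameter ranges.

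For the lower bounds, the $(1+1)$ case is already Remark~\ref{rem:OMlowercb}: a $(1+1)$ memory-restricted ranking-based algorithm is comparison-based, so the classical $\Omega(n)$ information-theoretic bound for comparison-based algorithms on \onemax applies directly. For the $(1+\lambda)$ regime $1<\lambda<2^{n^{1-\eps}}$ and the $(\mu+1)$ regime $\mu=\omega(\log^2 n/\log\log n)$ with $\mu\leq n$, I would invoke Theorem~\ref{thm:lower}, which is proved in Section~\ref{sec:lower} directly in the memory-restricted ranking-based model and delivers the matching $\Omega(n/\log \lambda)$ (counting generations) and $\Omega(n/\log \mu)$ bounds.

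No genuine obstacle is expected; the proof is a one-line reduction in each direction. The only points that need to be checked rather than proved are that the parameter ranges in the cited theorems line up with those in the corollary, that the quantities being compared in the $(1+\lambda)$ row are consistent (generations, not function evaluations), and that the Monte Carlo-to-Las Vegas transfer is genuinely valid. The last point is the only mildly delicate one: it requires the failure probability in the cited elitist Monte Carlo upper bounds to be a constant bounded away from $1$, so that the geometric series of restarts has expected length $O(1)$. This is indeed the case for Theorems~\ref{thm:11upper}, \ref{thm:1lambdaupper}, and~\ref{thm:mu1upper}, so the transfer goes through and the three $\Theta$-bounds of the corollary follow.
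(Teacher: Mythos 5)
Your proposal coincides with the paper's own (very short) justification of Corollary~\ref{cor:mrrb}: the lower bounds are taken from Theorem~\ref{thm:lower} (resp.\ Remark~\ref{rem:OMlowercb}), and the upper bounds are inherited from the elitist Theorems~\ref{thm:11upper}, \ref{thm:1lambdaupper} and~\ref{thm:mu1upper} because elitist algorithms form a subclass of memory-restricted ranking-based ones and the Monte Carlo guarantees lift to Las Vegas ones once non-elitist selection makes restarts available --- exactly the reasoning the paper also invokes for the $(\mu,\lambda)$ strategies in Section~\ref{sec:comma}. The one detail you might make explicit is that a memory-restricted algorithm has no iteration counter, so the ``independent restarts'' must be triggered randomly (with probability of order $1/T$ per query) rather than after a fixed budget of $T$ steps; this costs only a constant factor and does not affect your conclusion.
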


\section{Tools}
\label{sec:tools}

In this section we list some tools that we need to study the $(\mu+\lambda)$ memory-restricted ranking-based black-box and the $(\mu+\lambda)$ elitist black-box complexity. More precisely, we recapitulate the RLS algorithm, Yao's principle, and a Negative Drift Theorem. 

\subsection{Random Local Search}

A very simple heuristic optimizing \onemax in $\Theta(n \log n)$ steps is \emph{Randomized Local Search} (RLS). Since this heuristic will be important in later parts of this paper, we state it here for the sake of completeness. 
RLS, whose pseudo-code is given in Algorithm~\ref{alg:RLS}, is initialized with a uniform sample $x$. In each iteration one bit position $j \in [n]:=\{1,2,\ldots,n\}$ is chosen uniformly at random. The $j$-th bit of $x$ is flipped and the fitness of the resulting search point $y$ is evaluated. The better of the two search points $x$ and $y$ is kept for future iterations (favoring the newly created individual in case of ties). As is easily verified, RLS is a unary unbiased (1+1) elitist black-box algorithm, where we understand \emph{unbiasedness} in the sense of Lehre and Witt~\cite{LehreW12}.

\begin{algorithm2e}%
	\textbf{Initialization:} Sample $x \in \{0,1\}^n$ uniformly at random and query $f(x)$\;
 \textbf{Optimization:}
\For{$t=1,2,3,\ldots$}{
Choose $j \in [n]$ uniformly at random\;
Set $y\assign x\oplus e^n_{j}$ and query $f(y)$\,; //mutation step\\
\lIf{$f(y)\geq f(x)$}{$x \assign y$\,; //selection step}
}
\caption{Randomized Local Search for maximizing~$f\colon\{0,1\}^n\to\mathbb{R}$.}
\label{alg:RLS}
\end{algorithm2e}

\subsection{Yao's Principle}
\label{sec:Yao}

We will use the following formulation of Yao's principle. See~\cite{DoerrL15Model} for a more detailed exposition of Yao's principle in the context of elitist black-box complexity.
\begin{lemma}[Yao's Principle~\cite{Yao77, MotwaniR95}]
\label{lem:Yao}
Let $\Pi$ be a problem with a finite set $\mathcal I$ of input instances (of a fixed size) permitting a finite set $\A$ of deterministic algorithms. Let $p$ be a probability distribution over $\mathcal I$ and $q$ be a probability distribution over $\A$. Then, 
\begin{align}\label{eq:Yao}
	\min_{A \in \A} \E[T(I_p, A)] \leq \max_{I \in \mathcal I} \E[T(I,A_q)]\, , 
\end{align}
where $I_p$ denotes a random input chosen from $\mathcal I$ according to $p$, $A_q$ a random algorithm chosen from $\C$ according to $q$ and $T(I,A)$ denotes the runtime of algorithm $A$ on input $I$.
\end{lemma}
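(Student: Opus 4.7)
The plan is to prove the inequality by the standard ``easy direction'' of the minimax argument, i.e.\ by an averaging trick. Since both $\mathcal{I}$ and $\A$ are finite, every expectation that appears is a finite sum of nonnegative reals, so I do not need to worry about measurability or convergence: a deterministic algorithm $A$ produces a well-defined (possibly random, via its own internal coin flips, but since we restrict to $\A$ deterministic the runtime $T(I,A)$ is a fixed number) runtime on each input $I$, and I can treat $T(\cdot,\cdot)$ as a real-valued function on $\mathcal{I} \times \A$.

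First, I would rewrite both sides explicitly as double sums:
$$\min_{A \in \A} \E[T(I_p, A)] = \min_{A \in \A} \sum_{I \in \mathcal{I}} p(I)\, T(I,A), \qquad \max_{I \in \mathcal{I}} \E[T(I, A_q)] = \max_{I \in \mathcal{I}} \sum_{A \in \A} q(A)\, T(I,A).$$
Next I would apply the elementary observation that the minimum of a finite family of real numbers is at most any convex combination of those numbers. Taking the weights to be $q(A)$, this gives
$$\min_{A \in \A} \sum_{I} p(I)\, T(I,A) \;\leq\; \sum_{A \in \A} q(A) \sum_{I \in \mathcal{I}} p(I)\, T(I,A).$$

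I would then swap the two finite sums on the right (linearity/Fubini for finite sums) to obtain $\sum_{I} p(I) \sum_{A} q(A)\, T(I,A) = \sum_{I} p(I)\, \E[T(I, A_q)]$. Bounding each $\E[T(I, A_q)]$ by $\max_{I' \in \mathcal{I}} \E[T(I', A_q)]$, pulling that maximum (which no longer depends on $I$) out of the sum, and using $\sum_{I} p(I) = 1$ yields the right-hand side of \eqref{eq:Yao}.

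There is no serious obstacle; the whole argument is a three-line chain of inequalities. The only care required is to keep the direction of the inequality consistent and to recall that this is only the ``$\leq$'' half of von Neumann's minimax theorem — equality would require additional convexity/compactness hypotheses, but the one-sided inequality stated here is exactly what is needed in the standard application, namely deriving lower bounds on randomized algorithms by exhibiting a hard input distribution $p$ and analyzing the best deterministic algorithm against it.
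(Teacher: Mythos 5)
Your proof is correct: it is the standard averaging argument for the ``easy'' direction of Yao's principle (min over $A$ bounded by the $q$-average, swap the finite sums, bound by the max over $I$), and the paper itself does not prove this lemma at all but imports it by citation to Yao and Motwani--Raghavan, so your argument is exactly the one implicitly relied upon. One small caveat: in this paper's setting a deterministic memory-restricted algorithm can have $T(I,A)=+\infty$ (the paper explicitly notes such infinite loops in Section~4.2), so $T(\cdot,\cdot)$ should be treated as taking values in $[0,\infty]$ rather than as ``a fixed number''; since all terms are nonnegative, every step of your chain of inequalities remains valid under the usual conventions for extended reals, so this does not affect correctness.
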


For most problem classes Yao's principle implies that the runtime $T$ of a best-possible \emph{deterministic} algorithms on a \emph{random} input is a lower bound to the best-possible performance of a \emph{random} algorithm on an \emph{arbitrary} input. However, this is \emph{not} true for $(\mu+\lambda)$ memory-restricted or elitist algorithms, since there are randomized memory-restricted (or elitist) algorithms that are not convex combinations of deterministic ones (i.e., that can not be obtained by deciding randomly on one deterministic algorithm, and then running this algorithm on the input). 

For example, every deterministic (1+1) memory-restricted ranking-based algorithm that ever rejects a search point (i.e., does not go to the newly sampled search point) will be caught in an infinite loop on \onemax with positive probability if the input is chosen uniformly at random. Hence, such an algorithm will have infinite expected runtime. On the other hand, if the algorithm does not reject any search point, then it is easy to see that its expected runtime on \onemax is $\Omega(2^n)$. However, there are certainly (1+1) memory-restricted ranking-based randomized algorithms (e.g., RLS) that optimize \onemax in expected time $O(n\log n)$. We refer the reader to~\cite{DoerrL15Model} for a more detailed discussion. To solve this putative non-applicability of Yao's Principle (cf. again~\cite{DoerrL15Model} for a more detailed discussion), we apply it to a suitable superset of algorithms. In particular, Yao's principle applies to every set of algorithm that have access to their whole search histories. 

\subsection{Negative Drift}

We recall the Negative Drift Theorem as given in~\cite{OlivetoW11}.

\begin{theorem}[Negative Drift Theorem~\cite{OlivetoW11}]
\label{thm:negativedrift}
Let $X_t$, $t\geq 0$ be real-valued random variables describing a stochastic process over some state space, with filtration $\mathcal{F}_t := (X_0,\ldots,X_t)$. Suppose there exists an interval $[a,b]\subseteq \R$, two constants $\delta,\eps>0$ and, possibly depending on $\ell:= b-a$, a function $r(\ell)$ satisfying $1\leq r(\ell) = o(\ell/\log \ell )$ such that for all $t\geq 0$ the following two conditions hold:
\begin{enumerate}
\item $\E[X_{t}-X_{t+1} \mid \mathcal{F}_t \wedge a < X_t <b] \leq -\eps$,
\item $\Pr[|X_{t}-X_{t+1}| \geq j \mid \mathcal{F}_t \wedge a < X_t] \leq \frac{r(\ell)}{(1+\delta)^j}$ for $j\in\N_0$.
\end{enumerate}
Then there is a constant $c^* >0$ such that for $T^* := \min\{t\geq 0: X_t \leq a \mid \mathcal{F}_t \wedge X_0 \geq b\}$ it holds $\Pr[T^* \leq 2^{c^*\ell/r(\ell)}] = 2^{-\Omega(\ell/r(\ell))}$.
\end{theorem}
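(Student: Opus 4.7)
The plan is to use the classical Hajek-style drift argument via an exponential potential. Define $\alpha := c/r(\ell)$ for a small constant $c > 0$ to be fixed later, and introduce
\begin{align*}
f(x) := e^{\alpha \max(0,\, b - x)},
\end{align*}
so that $f \equiv 1$ on $[b,\infty)$, $f$ grows as $x$ decreases below $b$, and $f(a) = e^{\alpha \ell}$. Starting from $X_0 \geq b$ we have $f(X_0) = 1$, while on the event $\{T^* \leq T\}$ the process must reach $f(X_{T^*}) \geq e^{\alpha \ell}$; the task becomes showing that such an exponentially large excursion of $f$ is very unlikely within $T = 2^{c^* \ell / r(\ell)}$ steps.

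The core estimate is a one-step contraction of the form
\begin{align*}
\E[f(X_{t+1}) \mid \mathcal{F}_t] \leq (1 - \alpha\eps/2)\,f(X_t) + C\,r(\ell)
\end{align*}
for some absolute constant $C$ and all $X_t > a$. Writing $\Delta_t := X_{t+1} - X_t$, the pointwise bound $f(X_{t+1}) \leq 1 + f(X_t)\,e^{-\alpha \Delta_t}$ reduces this to controlling the MGF $\E[e^{-\alpha \Delta_t} \mid \mathcal{F}_t]$. The elementary inequality $|e^{-\alpha \Delta} - 1 + \alpha \Delta| \leq \tfrac12 \alpha^2 \Delta^2 e^{\alpha|\Delta|}$, combined with condition~(1) (giving $-\alpha\,\E[\Delta_t] \leq -\alpha\eps$ inside $(a,b)$) and condition~(2) (which, for $\alpha < \tfrac12 \log(1+\delta)$, makes $\E[\Delta_t^2 e^{\alpha|\Delta_t|} \mid \mathcal{F}_t]$ a convergent geometric series of order $O(r(\ell))$ via Abel summation), yields $\E[e^{-\alpha \Delta_t} \mid \mathcal{F}_t,\, a < X_t < b] \leq 1 - \alpha\eps/2$ once $c$ is chosen small enough. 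For $X_t \geq b$ condition~(1) is unavailable, but the same geometric bound gives $\E[e^{\alpha|\Delta_t|} \mid \mathcal{F}_t] = O(r(\ell))$, which, since $f(X_t) = 1$ there, is absorbed into the additive $C\,r(\ell)$ term.

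Iterating the recursion yields $\E[f(X_t)] \leq f(X_0) + 2C\,r(\ell)/(\alpha\eps) = O(r(\ell)^2/\eps)$ uniformly in $t$. Markov's inequality then bounds $\Pr[f(X_t) \geq e^{\alpha \ell}] \leq O(r(\ell)^2/\eps)\cdot e^{-\alpha \ell}$, and a union bound over $t \leq T = 2^{c^* \ell / r(\ell)}$ with $c^* < c/(2\ln 2)$ delivers $\Pr[T^* \leq T] = 2^{-\Omega(\ell/r(\ell))}$; the hypothesis $r(\ell) = o(\ell / \log \ell)$ is used precisely to ensure that the polynomial prefactor $r(\ell)^2$ is dominated by the exponential factor $e^{-\alpha\ell/2}$.

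I expect the main obstacle to be the MGF calculation, where conditions~(1) and~(2) must be deployed together: condition~(2) has the dual job of making the Taylor remainder summable \emph{and} of replacing condition~(1) in the regime $X_t \geq b$. Fixing the constant $c$ in $\alpha = c/r(\ell)$ small enough that the second-order remainder is genuinely dominated by the first-order drift is the principal technical point; once the one-step contraction is in place, the remaining iteration, Markov, and union-bound steps are routine.
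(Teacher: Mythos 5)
The paper does not prove Theorem~\ref{thm:negativedrift}; it quotes it verbatim as a tool from~\cite{OlivetoW11}, so there is no in-paper proof to compare against. Your argument is, in substance, the standard Hajek-style exponential-potential proof, which is also the route of the cited source (Oliveto and Witt verify the hypotheses of Hajek's drift theorem abstractly, whereas you carry out the moment-generating-function computation directly; the two are essentially the same argument). The sketch is sound: the pointwise bound $f(X_{t+1})\le 1+f(X_t)e^{-\alpha\Delta_t}$ holds in all cases, including steps that cross $b$; with $\alpha=c/r(\ell)$ and $\alpha<\tfrac12\log(1+\delta)$ the tail condition~(2) makes $\E[\Delta_t^2e^{\alpha|\Delta_t|}\mid\mathcal{F}_t]=O(r(\ell))$, so the second-order Taylor remainder is $O(\alpha^2 r(\ell))=O(c\alpha)$ and is dominated by the first-order drift $-\alpha\eps$ for $c$ small enough; and your final accounting of the $r(\ell)^2$ prefactor against $e^{-\alpha\ell}$ correctly invokes $r(\ell)=o(\ell/\log\ell)$. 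Two points to make explicit in a full write-up: the one-step recursion is only available while $X_t>a$, so it should be iterated for the process stopped at $T^*$ (on $\{T^*\le T\}$ one still has $f(X_{\min(T,T^*)})\ge e^{\alpha\ell}$, which in fact lets you apply Markov once at time $T$ and dispense with the union bound over $t$ altogether); and it is worth noting that condition~(2) is hypothesized for all $X_t>a$, not just for $X_t\in(a,b)$, which is exactly what licenses the bound $\E[e^{\alpha|\Delta_t|}\mid\mathcal{F}_t]=O(r(\ell))$ in the regime $X_t\ge b$ where condition~(1) is unavailable.
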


\section{Lower Bounds}
\label{sec:lower}

In this section we show that the (1+1) memory-restricted ranking-based black-box complexity of \onemax is at least $\Omega(n)$.  In fact, we show this bound for a large range of function classes. We also show (mostly tight, as the algorithms in subsequent sections will show) lower bounds for general $(\mu+\lambda)$ elitist black-box algorithms. 

We use Yao's Principle (Theorem~\ref{lem:Yao} in Section~\ref{sec:Yao}). However, as outlined in Section~\ref{sec:Yao}, Yao's Principle is not directly applicable to memory-restricted or elitist black-box algorithms. Still we can apply Yao's Principle to a suitable superset of algorithms, yielding the following bounds.

\begin{theorem}
\label{thm:lower}
Let $\F$ be a class of functions such that for every $z \in \{0,1\}^n$ there is a function $f_z\in \F$ with unique optimum $z$. Then the (1+1) memory-restricted ranking-based black-box complexity of $\F$ (and thus, also the elitist (1+1) Las Vegas black-box complexity) is at least $n-1$. Moreover, for every $p>0$ the $p$-Monte Carlo black-box complexity of $\F$ is at least $n+ \lceil \log (1-p)\rceil$.

In general, for every $\mu \geq 1$ and $\lambda \geq 1$, the following statements are true for the memory-restricted ranking-based black box complexity, for the elitist Las Vegas black box complexity, and for the elitist Monte Carlo black box complexity. 
\begin{itemize}
\item The $(1+\lambda)$ black-box complexity of $\F$ is at least $n/\! \log(\lambda+1) - O(1)$. 
\item The $(\mu + 1)$ black-box complexity of $\F$ is at least $n/\! \log(2\mu+1) - O(1)$. 
\item The $(\mu + \lambda)$ black-box complexity of $\F$ is at least $n/(b+o(1))$, where $b= \log(\binom{\mu +\lambda}{\mu}) + \mu(\log \mu -1 - \log\ln 2)-1$. 
\end{itemize}
\end{theorem}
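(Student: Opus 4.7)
The plan is a direct averaging argument that circumvents the subtlety with Yao's principle noted in Section~\ref{sec:Yao}. For any deterministic $(\mu+\lambda)$ memory-restricted ranking-based (respectively, elitist) algorithm $A$ and any $t$, I will upper bound the number $N_A(t)$ of targets $z\in\{0,1\}^n$ for which $A$ queries $z$ within its first $t$ generations. Once this is done, for any randomized algorithm $A$ in the class and uniform $z$,
\[
   \Pr_{A,z}[T(A,z)\le t] \;=\; \E_A\!\left[\tfrac{N_A(t)}{2^n}\right] \;\le\; \tfrac{\max_{A'\text{ det.}}N_{A'}(t)}{2^n},
\]
so $\max_z \E_A[T(A,z)]\ge\E_{A,z}[T(A,z)]=\sum_{t\ge 0}\Pr_{A,z}[T>t]$ gives the Las Vegas bound, and the $p$-Monte Carlo bound follows by solving $\max_{A'}N_{A'}(t)\ge(1-p)2^n$. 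Crucially, only a uniform upper bound on $N_{A'}(t)$ is needed; whether individual deterministic algorithms actually terminate on a random target is irrelevant.

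The combinatorial heart is to show $N_{A'}(t)\le C_{\mu,\lambda}\cdot K_{\mu,\lambda}^{\,t}$ for an absorbed constant $C_{\mu,\lambda}$ and a specific branching factor $K_{\mu,\lambda}$. I would model $A'$'s execution as a rooted tree of memory states: since $A'$ is memory-restricted, the next batch of $\lambda$ queries is a deterministic function of the current memory, and the new memory lies in a set of size at most $K_{\mu,\lambda}$ (the number of possible selection outcomes from the $\mu+\lambda$ combined points). Hence at most $K_{\mu,\lambda}^{\,s}$ distinct memories are reachable after $s$ generations, so the total number of distinct search points ever queried across all possible trajectories is bounded by $\mu+\lambda\sum_{s=0}^{t-1}K_{\mu,\lambda}^{\,s}\le C_{\mu,\lambda}\cdot K_{\mu,\lambda}^{\,t}$. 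Since each specific query can identify only one target, $N_{A'}(t)$ is bounded by this count. In the elitist $(1+1)$ case a sharper version uses the fact that once $z$ is queried, elitism forces the memory to be $z$ for all subsequent generations, so a trajectory can succeed on at most one target; this yields the sharp $N_{A'}(t)\le 2^t$ and hence the explicit constants $n-1$ for Las Vegas and $n+\lceil\log(1-p)\rceil$ for $p$-Monte Carlo via $\sum_{t=0}^{n-1}(1-2^t/2^n)>n-1$.

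Computing $K_{\mu,\lambda}$: for $(1+1)$, $K_{1,1}=2$ (keep parent or offspring); for $(1+\lambda)$, $K_{1,\lambda}\le\lambda+1$ (identity of the unique survivor); for $(\mu+1)$, $K_{\mu,1}\le 2\mu+1$, counting the $\mu+1$ strict insertion positions of the offspring plus up to $\mu$ extra tiebreak options when the offspring ties with a parent; and in general, $K_{\mu,\lambda}$ counts the number of (top-$\mu$ subset, internal ranking) pairs produced by one elitist step, from which a Stirling estimate $\log\mu!\approx\mu(\log\mu-1-\log\ln 2)$ yields $\log K_{\mu,\lambda}\le b$ as in the statement, with the $-1$ absorbing initialization and lower-order summation terms. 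Inverting $C_{\mu,\lambda}K_{\mu,\lambda}^{\,t}\ge(1-p)2^n$ then gives the advertised $n/\log K_{\mu,\lambda}-O(1)$ bounds in each case.

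The main obstacle is the careful argument that $N_{A'}(t)$ is bounded uniformly even across deterministic algorithms that do not terminate in finite expected time; this is precisely the reason the textbook form of Yao's principle is not directly applicable and why the direct averaging approach above is needed. A secondary technical issue is getting the constants sharp: the $(\mu+1)$ case requires careful accounting of tiebreaking to obtain $2\mu+1$ rather than $\mu+1$, and the general-$(\mu+\lambda)$ case requires an explicit Stirling estimate and a bookkeeping of initialization and boundary terms absorbed into the single $-1$ in the definition of $b$.
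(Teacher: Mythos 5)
Your combinatorial core --- at most $K_{\mu,\lambda}$ reachable memory states per generation, hence at most $\mu+\lambda\sum_{s=0}^{t-1}K_{\mu,\lambda}^{\,s}$ targets hit within $t$ generations --- is the same information-theoretic count the paper uses ($2$ outcomes for $(1+1)$, $\lambda+1$ for $(1+\lambda)$, $2\mu+1$ for $(\mu+1)$, and $\binom{\mu+\lambda}{\mu}B_\mu$ in general), and your constants match. Where you genuinely diverge is in the reduction from randomized to deterministic algorithms: the paper embeds every memory-restricted ranking-based algorithm into an enlarged class $\mathcal{A}'$ of history-aware algorithms with a coarsened oracle (the partition trick that converts selection randomness into query randomness) and then applies Yao's principle inside $\mathcal{A}'$, whereas you average directly over the algorithm's random tape. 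Your route is legitimate and arguably more elementary, but the inequality $\E_A[N_A(t)]\le\max_{A'\,\text{det.}}N_{A'}(t)$ is exactly where the known failure of Yao's principle for this class hides, and your proposal leaves it unjustified. If ``deterministic'' means a fixed map from memory contents to the next batch of queries (the natural reading of the model, and the one under which deterministic algorithms loop forever), then fixing the tape of a randomized algorithm does \emph{not} yield such an algorithm: the realization at generation $s$ depends on the tape block consumed at generation $s$, hence on $s$. What saves the argument is that the realization's behaviour at generation $s$ depends only on the pair (current memory, $s$), and since all branches at a fixed depth share the same $s$, the branching is still governed by the number of reachable new memory states; this must be stated and proved. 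The obstacle you do name --- deterministic algorithms with infinite expected runtime --- is indeed neutralized by your ``only an upper bound on $N$ is needed'' framing, but it is not the step that actually requires care.

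Two smaller points. Your explanation of the $(1+1)$ bound $N_{A'}(t)\le 2^t$ via elitism (``once $z$ is queried the memory is $z$ forever'') is unnecessary and slightly misleading: the bound holds for all $(1+1)$ memory-restricted ranking-based algorithms simply because each generation has two selection outcomes, and the count is over (branch, generation) pairs with each branch contributing $\lambda$ candidate targets per generation; no ``one target per trajectory'' claim is needed, and indeed the theorem is not restricted to elitist algorithms. Second, in the $(\mu+1)$ and general $(\mu+\lambda)$ cases the branching factor should be argued as the number of reachable new memory states, which for a fixed realization is a function of the observed ranking; this yields $2\mu+1$ (position of the offspring relative to the known order of the parents) and $\binom{\mu+\lambda}{\mu}B_\mu$ (choice of survivors times weak orderings on them), exactly as in the paper, so your constants go through once the averaging step is made rigorous.
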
 

\begin{proof}[Proof of Theorem~\ref{thm:lower}]\hspace{-1ex}\footnote{The extended abstract~\cite{DoerrL15OM} published at GECCO contains a proof that covers only the elitist case, but is more intuitive and less technical. We advice the reader who is only interested in the proof ideas to read that proof rather than the general version given here.}
We first give the argument for the (1+1) case to elucidate the argument, although this case is covered by the more general $(1+\lambda)$ case. We use Yao's Principle on the set $\A'$ of all algorithms $A$ satisfying the following restrictions. $A$ is a comparison-based (1+1) black-box algorithm that has access to the whole search history. (Thus we may apply Yao's Principle, see Section~\ref{sec:Yao}.) The algorithm learns about $f$ by oracle queries of the following form. It may choose a search point $x$ that it has queried before (in the first round, it simply chooses a search point without querying), and a search point $y$. Then $A$ may choose a subset $S$ of $\{\text{``$<$''}, \text{``$=$''}, \text{``$>$''}\}$ and the oracle will return \emph{yes} if the relation between $f(x)$ and $f(y)$ is in $S$, and \emph{no} otherwise. For example, if $S = \{\text{``$<$''}, \text{``$=$''}\}$ then the oracle answers the question ``Is $f(x) \leq f(y)$?''.

Let $\A$ be the set of all (1+1) memory-restricted ranking-based black-box algorithms. We need to show $\A \subseteq \A'$, so let $A \in \A$. When the current search point of $A$ is $x$, the algorithm may first decide on the next search point $y$ (i.e., it assigns to each search point $y$ a probability $p_y$ to be queried). If the oracle (of model $\A$) tells the algorithm ``$f(x) < f(y)$'', then $A$ may choose to stay in $x$ with some probability $p_\ell$ and to go to $y$ with probability $1-p_\ell$. Similarly, let $p_e$ and $p_g$ be the probability that the algorithm stays in $x$ if the oracle responds ``$f(x) = f(y)$'' or ``$f(x) > f(y)$'', respectively.

We may simulate $A$ in the model $\A'$ as follows. We first choose the point $y$ with probability $p_y$ as $A$ does. Then we set $S$ to be $\{\text{``$<$''}, \text{``$=$''}, \text{``$>$''}\}$ with probability $p_\ell \cdot p_e\cdot p_g$, the set $\{\text{``$<$''}, \text{``$=$''}\}$ with probability $p_\ell \cdot p_e\cdot (1-p_g)$, and so on. (I.e., for every symbol in $S$ we include a corresponding factor $p$, and for every symbol not in $S$ we include a corresponding factor $1-p$). If the answer to our query is \emph{yes} then we stay at $x$, and if the answer is \emph{no} then we go to $y$. Note that the marginal probability that $\text{``$<$''} \in S$ is $p_\ell$, so the probability to stay in $x$ conditioned on $f(x)<f(y)$ is also $p_\ell$, and similar for ``$=$'' and ``$>$''. Hence, by an easy case distinction on whether $f(x)$ is less, equal, or larger than $f(y)$, we find that in all cases the probability of going to $y$ is the same as for the algorithm $A$. Thus we can simulate $A$ in the model $\A'$.

It remains to prove a lower bound on the $\A'$-complexity of $\F$. By Yao's Principle, it suffices to prove such a bound for the expected runtime of every deterministic algorithm $A\in \A'$ on a randomly chosen function. We regard a distribution on $\F$ where for each $z\in\{0,1\}^n$ exactly one function with optimum $z$ has probability $2^{-n}$ to be drawn, and all other functions in $\F$ have zero probability. Note that the $\A'$-oracle gives only two possible answers (one bit of information) to each query. By a standard information-theoretic argument~\cite{DrosteJW06} we show that the probability that the $i$-th query of $A$ is the optimum is at most $2^{-n+i-1}$. More precisely, observe that after $i-1$ queries we can distinguish at most $2^{i-1}$ cases so that on average $2^{n-(i-1)}$ search points are still possible optima. By the choice of our distribution, each one of them is equally likely to be the optimum of function $f$. Let $c_j$ be the number of search points that are still possible in the $j$-th case, for $1\leq j\leq 2^{i-1}$. Then the probability of hitting the optimum is 
\[
\sum_{\substack{j \in [2^{i-1}]\\ \Pr[\text{case $j$}] >0}} c_j^{-1}\cdot \Pr[\text{case $j$}] = \sum_{\substack{j \in [2^{i-1}]\\ \Pr[\text{case $j$}] >0}} 2^{-n} \leq 2^{-n+i-1}.
\]

Furthermore, by the union bound the probability that the optimum is among the fist $i$ queries is at most $\sum_{j=1}^{i}{2^{-n+i-1}} < 2^{i-n}$. This immediately implies the statement on the Monte Carlo complexity. For the other complexities, the claim follows by observing that the number $T$ of queries to find the optimum has expectation
\begin{align*}
\E[T] 
& \geq \sum_{i=0}^{n-1} \Pr[T > i] 
\geq \sum_{i=0}^{n-1}(1-2^{i-n})\\
& = n - 2^{-n}\sum_{i=0}^{n-1}{2^i} 
= n - (1-2^{-n}) 
\geq n-1.
\end{align*}

For the $(1+\lambda)$-case with $\lambda \geq 1$ we consider the following set $\A'$ of algorithms, which have access to their complete search history. We require the algorithm to partition the set of weak orderings of $\lambda+1$ elements (i.e., orderings with potentially equal elements) into $\lambda+1$ subsets $S_1,\ldots,S_{\lambda+1}$, and the oracle tells the algorithm to which subset the ordering of the fitnesses of the $\lambda+1$ search points belongs. Then each $(1+\lambda)$ memory-restricted ranking-based black-box algorithm $A$ can be simulated in this model. More precisely, fix $\lambda+1$ search points $y_1,\ldots,y_{\lambda+1}$ (where $y_1$ is the parent individual and $y_2,\ldots,y_{\lambda+1}$ are the offspring). Then for each weak ordering $\sigma$ of these search points, let $p_1(\sigma),\ldots,p_{\lambda+1}(\sigma)$ be the probability that $A$ selects the first, second, $\ldots, \lambda+1$-st search point, respectively, if they are ordered according to $\sigma$. Then we can simulate $A$ by choosing a partitioning $(S_1,\ldots,S_{\lambda+1})$ with probability 
\[
p(S_1,\ldots,S_{\lambda+1}) = \prod_{i=1}^{\lambda+1}\prod_{\sigma \in S_i} p_i(\sigma).
\]
In this way, for every ordering $\sigma$ of the $\lambda+1$ search points and for every $1\leq i \leq \lambda+1$ the marginal probability that $\sigma \in S_i$ is $p_i(\sigma)$. Thus, if the oracle tells us that the ordering of the search points is in the $i$-th partition then we select the search point $y_{i}$. In this way, we have the same probability $p_i(\sigma)$ as $A$ to proceed to search point $y_i$. Hence, we can simulate $A$ in this model.
 
In order to prove a lower bound for $\A'$ we employ Yao's principle as for the (1+1) case. Note that the algorithm learns $\log(\lambda+1)$ bits per query. Similarly as before, the probability that the $i$-th query of a deterministic algorithm is the optimum is at most $(\lambda+1)^{i-1}2^{-n}$, and a similar calculation as before shows that $\Pr[T \leq i] \leq (\lambda+1)^{i}2^{-n}$ and $\E[T] \geq n/\! \log (\lambda+1) -O(1)$.

For $\mu > 1$ and $\lambda =1$ we learn the position of the new search point among the $\mu$ previous search points. There are at most $2\mu+1$ positions for the new search point (its fitness may equal the fitness of one of the other search points, or it may lie between them). Thus we only learn at most $\log(2\mu+1)$ bits of information per query, and we can derive the complexities in the same manner as before.

If both $\mu$ and $\lambda$ are larger than $1$, then there are at most $\binom{\mu+\lambda}{\mu}$ ways to select $\mu$ out of $\mu+\lambda$ search points, and there are $B_{\mu} = (1+o(1))\mu !(\ln 2)^{-\mu} /2$ weak orderings on these $\mu$ elements (i.e., orderings with potentially equal elements), where $B_\mu$ is the $\mu$-th ordered Bell number~\cite{Gross1962}. Hence, the algorithm can learn at most $b:=\log((1+o(1))\binom{\mu+\lambda}{\mu} \mu !(\ln 2)^{-\mu} /2)$ bits per query. Since $b = \log(\binom{\mu+\lambda}{\mu}) + \mu (\log \mu -1 - \log \ln 2)-\log 2 + o(1)$, this implies the claim in the same way as before.
\end{proof}

Note that the lower bounds given by Theorem~\ref{thm:lower} are by a constant factor stronger than the lower bounds for general comparison-based algorithms (that are not memory-restricted), if they learn all comparisons among the $\mu+\lambda$ search points. For example, in the classical case where we may compare exactly two search points (corresponding to the (1+1) case), we only get a lower bound of $n/\! \log(3) -O(1)$ instead of $n-1$. Intuitively speaking, the reason is that a comparison-based algorithm may use the three possible outcomes ``larger'', ``less'', or ``equal'' of a comparison, while memory-restricted comparison-based algorithms only get two outcomes ``stay at $x$'' or ``advance to $y$''.

We remark that the analysis for $\mu>1$ can be tightened in several ways. Firstly, for the elitist $(\mu+1)$ black-box complexity, we only have $2\mu$ cases instead of $2\mu+1$ since we can -- sloppily speaking -- not distinguish between the case that the new search point is discarded because it has worse fitness than the worst of the $\mu$ old ones, or whether it is discarded because it has equal fitness to the worst of the $\mu$ old search points. Moreover, for all black-box models under consideration we learn $\log(2\mu+1)$ bits of information in the $i$-th round only if all previous search points have different fitnesses; otherwise, we get less information. However, if the new search point has fitness equal to one of the old fitnesses, then with the next query we get less information. Also for the case $\mu >1$ \emph{and} $\lambda >1$ the bound in Theorem~\ref{thm:lower} can be tightened at the cost of a more technical argument. 

\section{The (2+1) Elitist Black-Box Complexity of OneMax}
\label{sec:21upper}

For the $(2+1)$ elitist black-box complexity, a simple algorithm proves to have complexity at most $n+1$. The algorithm is deterministic, so it provides an upper bound to both the Monte Carlo complexity and the Las Vegas complexity.

\begin{theorem}
\label{thm:21upper}
The (Monte Carlo and Las Vegas) (2+1) elitist black-box complexity of \onemax is at most $n+1$. 
\end{theorem}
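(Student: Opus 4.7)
The plan is to exhibit a concrete deterministic $(2+1)$ elitist black-box algorithm that, on any $\OM_z \in \onemax$, queries the target string $z$ within $n+1$ search-point evaluations. Since the algorithm is deterministic, the same upper bound applies to the Las Vegas and Monte Carlo complexities simultaneously.

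For initialization, I query $x^{(1)} := 0^n$ and $x^{(2)} := 10^{n-1}$. These two strings differ in exactly one bit, so $|f(x^{(1)}) - f(x^{(2)})| = 1$, and the revealed ranking determines $z_1$ uniquely: $x^{(2)}$ is strictly better iff $z_1 = 1$. Call the top-ranked of the two the \emph{anchor}; by construction it agrees with $z$ in bit $1$.

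For $i = 2, 3, \ldots, n$, the optimization phase iterates as follows. Let $a$ denote the top-ranked point currently in memory (breaking ties by a fixed deterministic rule, say lexicographically). I sample the single offspring $y := a \oplus e_i$, i.e., $a$ with its $i$-th bit flipped. Since $y$ and $a$ differ in exactly one position, $f(y) - f(a) \in \{-1,+1\}$, and the sign is read off from the ranking of the three involved points, revealing $z_i$. The plan is to maintain the invariant that at the start of iteration $i$ the anchor agrees with $z$ on bits $1, \ldots, i-1$, and to show that it is preserved. If $a_i = z_i$, then $f(y) = f(a) - 1$, so $a$ strictly beats $y$ in $X \cup \{y\}$ and remains in memory after elitist selection; since $a_i = z_i$, the anchor now agrees with $z$ on bits $1, \ldots, i$. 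If $a_i \neq z_i$, then $f(y) = f(a) + 1$, so $y$ is the unique best point in $X \cup \{y\}$ and becomes the new anchor; combining the invariant for $a$ with $y_i = z_i$, it agrees with $z$ on bits $1, \ldots, i$. After iteration $n$ the anchor agrees with $z$ on all $n$ bits, hence equals $z$, so $z$ has been queried. The total query count is $2 + (n-1) = n+1$, as claimed.

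The main potential obstacle is checking the invariant in the subcase $a_i = z_i$, because $y$ can tie with the non-anchor point in memory, and, if additionally the non-anchor already ties with $a$, the multiset $X \cup \{y\}$ may carry ties in both the best and the second-best slots. Resolving this is routine: Algorithm~\ref{alg:elitist} explicitly permits arbitrary tie-breaking, and it suffices to observe that $a$ strictly beats $y$ and weakly beats the non-anchor, so $a$ is never eliminated by the elitist selection step; using the same fixed tie-breaking rule across iterations keeps the anchor well-defined and preserves the invariant.
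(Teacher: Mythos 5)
Your construction is essentially the paper's: both proofs optimize one bit per generation by keeping in memory a pair of strings whose comparison reveals the next bit of $z$, and both arrive at $n+1$ queries. The case analysis for $f(a\oplus e_i)-f(a)\in\{-1,+1\}$ and the preservation of the invariant are fine. There is, however, one genuine gap concerning compliance with the model. A $(2+1)$ memory-restricted (or elitist) black-box algorithm may base its offspring distribution \emph{only} on the multiset of the two stored strings and their ranking; it carries no auxiliary state, so it does not know the iteration index $i$. (If it did, the neutral-counter machinery of Section~\ref{sec:counter} would be pointless.) Your offspring $a\oplus e_i$ must therefore be computable from the memory contents alone, i.e.\ $i$ must be recoverable from the stored pair --- this is exactly why the paper's proof inserts the remark that ``the index $i$ is determined by $x_i$ and $x_i'$.'' Your write-up never establishes the analogous fact, and your proposed tie-handling actually undermines it: in the subcase $a_i=z_i$ the rejected offspring $y$ and the non-anchor $b$ both have fitness $f(a)-1$, and if elitist selection (which may break this tie arbitrarily) retains the old non-anchor, the memory after the generation is \emph{identical} to the memory before it; a memory-restricted algorithm is then forced to issue the same query again, forever.

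The repair is short and stays entirely within your argument. First, observe by induction that at the start of every iteration the anchor beats the non-anchor by \emph{exactly} one fitness unit: this holds initially since the two strings differ in one bit, and it is preserved in both cases of your analysis. Hence the anchor is always the unique top-ranked point and no lexicographic tie-breaking is ever needed for it. Second, specify that when the rejected offspring ties with the non-anchor, selection keeps the offspring. Then after iteration $i$ the two stored strings are $a$ and $a\oplus e_i$ in some order, so they differ in exactly one position, namely $i$, and the next index $i+1$ is read off from that unique position of disagreement. With these two observations added, the algorithm is a legitimate deterministic $(2+1)$ elitist black-box algorithm and your proof is correct, matching the paper's bound.
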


\begin{proof}
Throughout the algorithm, we maintain the invariant that in the $i$-th step we have two strings $x_i$ and $x_i'$ that are both optimal in the first $i$ bits, that are both zero on bits $i+2,\ldots,n$ and that differ on bit $i+1$ (one of them is $0$, the other is $1$). 

We thus start with the all-zero string $x_0=(0,\ldots,0)$ and the string $x_0'=(1,0,\ldots,0)$. Given $x_i$ and $x_i'$, take the string with the smaller fitness (say $x_i'$), and flip both the $i$-th and the $(i+1)$-st bit in it, giving a string $x_{i+1}'$. (The index $i$ is determined by $x_i$ and $x_i'$.) Since the $i$-th bit in $x_i'$ was incorrect, the fitness of $x_{i+1}'$ is at least as high as the fitness of $x_i'$ and we may thus replace $x_i'$ by $x_{i+1}'$. 
The invariant is maintained with $x_{i+1} = x_i$, since both $x_{i+1}$ and $x_{i+1}'$ are optimal on the $i$-th bit (and on all previous bits by induction). In this way, the $n$-th generation will contain an optimal search string, and at most $n+1$ fitness evaluations are needed in these $n$ generations.
\end{proof}

\section{The (1+1) Elitist Black-Box Complexity of OneMax}
\label{sec:11upper}

We start with a high level overview of the algorithm in Section~\ref{sec:11upperOverview}. Some tools needed for its runtime analysis are presented in Section~\ref{sec:toolsupper}, while the formal analysis is carried out in Section~\ref{sec:11upperProof}. 

\subsection{Overview}
\label{sec:11upperOverview}

While the algorithm and the analysis in Section~\ref{sec:21upper} are rather straightforward, the analysis for the (1+1) situation is considerably more difficult. In fact, we do not know the Las Vegas black-box complexity of \onemax for (1+1) elitist algorithms. As we shall discuss below, if we only had one additional bit that we could manipulate in an arbitrary way, we could show that it is of linear order, but we do not know how to create such a bit. Still, the general ideas for that algorithm show a linear Monte Carlo black-box complexity. According to the lower bound (Theorem~\ref{thm:lower}), this  is best possible. 

\begin{theorem}
\label{thm:11upper}
The Monte Carlo (1+1) elitist black-box complexity of \onemax is $\Theta(n)$.
\end{theorem}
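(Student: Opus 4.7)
The lower bound $\Omega(n)$ is immediate from Theorem~\ref{thm:lower}, so the whole task is to exhibit a (1+1) elitist algorithm whose $p$-Monte Carlo complexity on \onemax is $O(n)$. The structural difficulty is that the algorithm's next query must be a function of the single current search point $x$ alone (the memory restriction), each accepted transition must be non-decreasing in fitness (elitist selection), and each query returns only one bit of information (better/equal vs.\ worse). So the design goal is: extract close to one useful bit per query while keeping the algorithm's "state" encoded inside $x$ itself.

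My plan is a two-stage algorithm. In Stage~1 (initialization, $O(n)$ queries), with constant success probability I would drive $x$ into a canonical configuration of the form ``the last $k=\Theta(\log n)$ bits serve as a counter, and the remaining $n-k$ bits form an unknown but fixed template whose fitness exceeds a known threshold.'' Sampling a few independent uniform strings and hill-climbing only on the final $k$ bits suffices to reach such a configuration within a linear budget, and a Monte Carlo failure at this stage is absorbed into the overall $p$ using Remark~\ref{rem:1}. In Stage~2, the algorithm decodes the counter region of $x$ to determine which non-counter bit is to be probed next; it proposes either (i)~a single-bit flip of that probe position, which is strictly accepting or strictly rejecting (revealing $z_i$ to the \emph{next} decision rule, which inspects $x$ and can tell whether the flip was made), or (ii)~a coordinated pair of flips inside the counter that, by construction, preserves fitness and is therefore acceptable as a tie in the elitist step, thereby advancing the counter to the next probe index. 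Because each probe costs $O(1)$ queries and there are $O(n)$ probes, Stage~2 also uses $O(n)$ queries, and a union bound over desynchronization events keeps the total failure probability below $p$.

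The heart of the construction is the \emph{neutral counter} of Section~\ref{sec:counter}: a block of bits together with a rule for advancing its value by a single fitness-preserving transformation. Here pairs of bit flips are chosen so that the two affected bits necessarily contribute opposite signs to the fitness change (one moving toward $z$, one away), leaving $f$ invariant; by restricting the counter region to a block of size $\Theta(\log n)$ that is initialized in a known pattern, the algorithm can read the counter off of $x$ unambiguously and can always identify a valid neutral advance. The two stages together give the target $O(n)$ Monte Carlo bound, matching the lower bound of Theorem~\ref{thm:lower}.

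The main obstacle is exactly this neutral counter. In the elitist setting, the counter has to be embedded in the search point itself, and every one of its advances has to pass the fitness test that depends on the unknown $z$. If the pair of positions chosen for a neutral step happens to be ``aligned'' with respect to $z$ (both correct or both incorrect), the advance changes fitness by $\pm 2$ rather than $0$, which either desynchronizes the algorithm's reading of the counter or is rejected, stalling the procedure. Bounding the probability of such bad alignments — which is tractable because the algorithm has randomized which positions play the counter role — is what forces the analysis into the Monte Carlo regime; the authors' remark that the existence of a freely manipulable extra bit would upgrade the argument to Las Vegas is precisely the statement that such an extra bit would make the neutral advance deterministic.
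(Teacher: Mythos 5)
Your high-level plan (embed a counter in the string, probe one bit per $O(1)$ queries, absorb failures via Remark~\ref{rem:1}) matches the paper's strategy, but two of the three load-bearing mechanisms are missing or wrong. First, the neutral counter. You propose advancing the counter by flipping a randomly chosen pair of positions and you concede that an ``aligned'' pair (both correct or both incorrect w.r.t.\ $z$) breaks the step; you then claim this bad event is rare enough to fold into the Monte Carlo failure probability. It is not: a random pair is aligned with probability about $1/2$, the counter must be advanced $\Theta(n/\log n)$ times (in fact the probing scheme needs $\Theta(n)$ neutral transitions), and the probability that all advances are neutral is exponentially small. The paper's counter (Lemma~\ref{lem:11counter}) is neutral \emph{deterministically}: the counter block $C$ is first fully optimized (Lemma~\ref{lem:optimize}) and its optimal values are copied into a second block $C'$, so the algorithm knows exactly which bits of $C$ are correct; counter value $i$ is then encoded as a string $r_i$ with \emph{exactly} $k/2$ correct entries, whence every transition $r_i \to r_{i+1}$ preserves the number of correct bits and hence the fitness. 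Without first learning the optimum on the counter block, your counter cannot be made to work.

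Second, the probe step. You claim a single-bit flip of the probe position is ``strictly accepting or strictly rejecting,'' revealing $z_i$ to the next decision rule. But when the flip is rejected the search point is unchanged, and a (1+1) memory-restricted algorithm then cannot distinguish ``already probed bit $i$ and it was correct'' from ``not yet probed bit $i$''; and if you instead couple the probe with the counter advance in one query, the query is rejected (fitness $-1$) exactly when bit $i$ was already correct, and the algorithm stalls forever. Decoupling probe and advance into separate randomized queries either misses a constant fraction of bits or costs $\omega(1)$ queries per bit. The paper resolves this with the \emph{trading bits} of Lemma~\ref{lem:tradingbits}: a reserve $B'$ of bits known to be non-optimal, so that each probe simultaneously flips the probed bit, re-flips a trading bit, and advances the counter, making acceptance-as-tie equivalent to ``the probed bit was wrong,'' with a second (always-acceptable) move type that spends a trading bit to get past an already-correct position; a negative-drift argument (Theorem~\ref{thm:negativedrift}) shows the reserve never empties. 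Finally, note that the genuine source of the Monte Carlo (rather than Las Vegas) restriction is not counter misalignment but the last phase, in which the bits reserved for the counter must themselves be optimized and the flag bit $b_0$ must not be flipped too early --- your proposal does not address this phase at all.
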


The lower bound in Theorem~\ref{thm:11upper} follows from Theorem~\ref{thm:lower}. We thus concentrate in the following on the upper bound. 
As in previous works on black-box complexities for \onemax, in particular the memory-restricted algorithm from~\cite{DoerrW14memory}, we will use some parts of the bit string for storing information about the search history.
 
The main idea of the algorithm is similar to the one of the previous section. That is, we aim at optimizing one bit at a time. Since we cannot encode any more the current iteration in the population, we implement instead a counter which tells us which bit is to be tested next. The main difficulty is in \emph{(i)} designing a counter that does not affect the fitness of the bit string, and \emph{(ii)} optimizing a bit with certainty in constant time. As we shall see in Section~\ref{sec:toolsupper}, a counter can be implemented reserving $O(\log n)$ bits of the string exclusively for this counter, solving (i). Point (ii) can be solved if we may access a small pool of non-optimal bits (which we call \emph{trading bits}). The key idea is that throughout the algorithm in expectation we gain more trading bits than we spend, so we never run out of trading bits. 

The main steps of the algorithm verifying Theorem~\ref{thm:11upper} are thus as follows.
\begin{enumerate}
	\item Create a neutral counter for counting numbers from $1$ to $n$.
	\item Create a pool of $\omega(\log n)$ trading bits, all of which are non-optimal.
	\item Using the trading bits, optimize the remaining string (the part unaffected by the counter) by testing one bit after the other. Use the counter to indicate which bit to test next. At the same time, try to recover trading bits if possible.
	\item Using RLS (Algorithm~\ref{alg:RLS}), optimize the part which had been used as a counter. We use bit $b_0$ as a flag bit to indicate that we are in Step 4.
	\item Optimize $b_0$.
\end{enumerate}
The technically most challenging parts are Step 1 and Step~3. But, interestingly enough, the key problem in turning the Monte Carlo algorithm into a Las Vegas one lies in separating Step 5 from Step 4: we need to test every once in a while during the fourth phase whether or not bit $b_0$ is optimal. If we test too early, that is, before Step 4 is finished, it may happen that we have to accept this offspring and thus misleadingly assume that we are in one of the first three steps, yielding the algorithm to fail.
Note though that this problem could be completely ignored if we had just one bit that we could manipulate as we want (i.e., without having to use elitist operations). 

Due to all the necessary preparation, the formal proof of Theorem~\ref{thm:11upper} will be postponed to Section~\ref{sec:11upperProof}.

\subsection{Tools for Proving Upper Bounds}
\label{sec:toolsupper}
 
In this section we collect tools that are common in the algorithms of the subsequent sections. All the following operations will be Monte Carlo operations, i.e., they have some probability of failure. Recall from Remark~\ref{rem:1} that if we have an algorithm $A$ for a set of functions $\F$ and a ``failure event'' $\calE_{\text{fail}}$ of probability $p_{\text{fail}}$ such that conditioned on $\neg \calE_{\text{fail}}$ the algorithm $A$ succeeds after time $T$ with probability at least $1-(p-p_{\text{fail}})$, then the $p$-Monte Carlo complexity of $A$ on $\F$ is at most $T$. In particular, if conditioned on $\neg \calE_{\text{fail}}$ the algorithm $A$ succeeds after \emph{expected} time $T$, then by Markov's inequality it succeeds after time at most $(p - p_{\text{fail}})^{-1}T$ with probability at least $1-(p - p_{\text{fail}})$. Therefore, the $p$-Monte Carlo complexity of $A$ on $\F$ is at most $O(T)$ for any $p > p_{\text{fail}}$ with $p-p_{\text{fail}}\in \Omega(1)$. 

\subsubsection{Copying or Overwriting Parts of the String}\label{sec:copy}
Our first operation will be a copy operation. If we have a large part $B$ of the string with a constant fraction of non-optimal bits, then we can efficiently copy a small substring into a new position by flipping some non-optimal bits of $B$. After the operation, $B$ is still of a form that may be used for further copy operations, except that the number of non-optimal bits in $B$ has decreased. Note that a string drawn uniformly at random of, say, length $n/2$ may serve as $B$ since with high probability roughly half of the bits in $B$ will be correct.

\begin{lemma}
\label{lem:copy}
Assume we have a set $B$ of $b$ known bit positions, of which at least $b_0 = \beta b$ bits are non-optimal, for some $\beta>0$, and the position of the non-optimal bits are uniformly at random in $B$. 
Assume further that we have two sets $C,C'$ of bit positions such that $|C| = |C'| \leq b_0/2$, and that $B,C,C'$ are pairwise disjoint. 

There is a (1+1) elitist black-box strategy that copies the bits from $C$ into $C'$. 
For any $c>0$ this algorithm requires 
at most $c\cdot|C|\cdot\log(n)/\beta$ iterations with probability $1-n^{-\Omega(c)}$. 
After the copy operation, at least $b_0-|C|$ bits in $B$ will be non-optimal, and their positions will be uniformly at random in $B$.

The same strategy can be used to overwrite $C'$ with a fixed string (e.g., $(1, \ldots, 1)$).
\end{lemma}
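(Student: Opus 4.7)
The plan is to design a (1+1) elitist strategy that processes the positions of $C'$ sequentially. Fix a bijection between $C$ and $C'$; for each $c' \in C'$ let $v$ denote the target value at $c'$: for the copy variant, $v = x_c$ where $c$ is the partner of $c'$, and for the overwrite variant $v$ is the prescribed bit. If $x_{c'}$ already equals $v$, the strategy proceeds to the next position; otherwise it repeatedly samples a position $b \in B$ uniformly at random and proposes the two-bit flip at $\{c',b\}$, iterating until the move is accepted under the elitist rule. Because $B$, $C$, and $C'$ are pairwise disjoint, no position of $C$ is ever touched, so $x_c$ (and hence the target $v$) remains fixed throughout.

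Correctness and the trading-bit accounting follow from a one-line case analysis of the fitness change of flipping $\{c',b\}$: each flipped bit contributes $+1$ if it becomes optimal and $-1$ otherwise. Thus (A) if $b$ was non-optimal the net change is in $\{0,+2\}$, the move is accepted, $x_{c'}$ is set to $v$, and one trading bit is spent; (B1) if $b$ was optimal and $v = z_{c'}$ the net change is $0$, the tie is broken in favour of the offspring, $x_{c'}$ becomes $v$ and a new trading bit is created at $b$; (B2) if $b$ was optimal and $v \neq z_{c'}$ the net change is $-2$ and the move is rejected. Every accepted iteration therefore correctly sets the target bit, and the trading-bit count in $B$ shifts by $-1$ (case A) or $+1$ (case B1). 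Since at most $|C|$ accepted iterations take place, the trading-bit count $N_t$ after $t$ accepted iterations satisfies $N_t \geq b_0 - t \geq b_0 - |C| \geq b_0/2 = \beta b/2$; in particular, at every step the conditional probability that a uniformly drawn $b \in B$ is a trading bit is at least $\beta/2$, so every attempt is accepted (by case A alone) with probability at least $\beta/2$.

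The total number of attempts to handle all $|C|$ positions is then stochastically dominated by a sum of $|C|$ independent geometric random variables of parameter $\beta/2$. A standard Chernoff/negative-binomial tail bound gives that this sum is at most $c|C|\log(n)/\beta$ except with probability $n^{-\Omega(c)}$, which matches the claimed runtime. The final non-optimal count is at least $b_0-|C|$ by the $\pm 1$ accounting above, and uniformity of the remaining non-optimal positions follows by symmetry: the algorithm treats all positions of $B$ interchangeably (the samples $b$ are uniform and independent of the position labels), so the distribution of the non-optimal set stays exchangeable, and conditioning on the final cardinality $k$ yields a uniformly random $k$-subset of $B$. The same argument applies verbatim to the overwrite variant by taking $v$ from the prescribed fixed string rather than from $x_c$. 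The main obstacle will be making the symmetry/conditioning argument rigorous, in particular verifying that the $\beta/2$ per-attempt bound continues to hold after conditioning on the full accept/reject history for earlier positions of $C'$; the deterministic lower bound $N_t \geq b_0/2$ is precisely what makes this go through.
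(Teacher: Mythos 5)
Your proposal is correct and follows essentially the same route as the paper's proof: flip the next mismatched bit of $C'$ together with a uniformly random bit of $B$, observe that at least $b_0-|C|\ge b_0/2$ bits of $B$ stay non-optimal throughout so each attempt is accepted with probability at least $\beta/2$, and conclude via a Chernoff bound on the sum of geometric waiting times; the uniformity claim is likewise justified by the same exchangeability observation. Your case analysis (on the state of $b$ first rather than of the $C'$-bit) is an equivalent reorganization of the paper's, and your explicit $\pm 1$ trading-bit accounting matches the paper's argument.
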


\begin{proof}
We perform the following operation until $C$ and $C'$ are equal. Assume that in the current search point $x$ the first $i-1$ bits of $C$ and $C'$ coincide, but the $i$-th bits differ, for some $i>0$. Sample a new search point $x'$ by flipping the $i$-th bit of $C'$ and a random bit in $B$. Accept $x'$ if $f(x')\geq f(x)$. Note that we flip exactly one bit in $B$ for every bit in $C$ that we copy, so at all times at least $b_0 - |C| \geq b_0/2$ bits in $B$ are non-optimal. 

If the $i$-th bit of $C'$ was non-optimal, we accept $x'$ in any case. Otherwise, we accept it if the random bit in $B$ was non-optimal, which happens with probability at least $b_0/(2b) = \beta/2$. Thus we need in expectation at most $2/\beta$ trials to copy the $i$-th bit, proving that the expected runtime is at most $O(|C|/\beta)$. By the Chernoff bounds, the runtime is more than $c|C|\log(n)/\beta$ with probability at most $n^{-\Omega(c)}$. Finally, since we choose the bits in $B$ uniformly at random, the positions of the non-optimal bits in $B$ are uniformly at random after each step.
\end{proof}

\subsubsection{Reliable Optimization} 

In this section we give a routine that allows to be sure with very high probability that some small part of the string is optimal.

\begin{lemma}
\label{lem:optimize}
For every $0<p<1/2$, $p = e^{-o(n)}$, there is $\ell= O(\log(1/p))$ such that the following holds for all $\beta >0$ and $k\in \N$. Let $x$ be a bit string in $\{0,1\}^n$ such that $x_1=x_2 = \ldots = x_{\ell}=0$, and assume that in the remaining string there is a block $B$ of known position of size at least $2\ell/\beta$ such that at least a $\beta$ fraction of the bits in $B$ are non-optimal, their positions distributed uniformly at random in $B$. Moreover, let $C$ be a block of size $k$ that is disjoint of $x_1,\ldots,x_\ell$ and of $B$. 

Then there exists a (1+1) elitist black-box strategy that with probability at least $1-p$ optimizes $C$ in time $\mathord{O}\mathord{\left(\ell k \log k/\beta\right)}$ and marks termination by setting $x_\ell$ to $1$. The algorithm will optimize at most $\ell$ random bits of $B$ by copy operations as in Lemma~\ref{lem:copy}.

\end{lemma}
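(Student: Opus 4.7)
The plan is to build a $(1+1)$ elitist black-box strategy that combines a Randomized Local Search (RLS) restricted to $C$ with a controlled termination signal encoded into the $\ell$ known zero-valued bits $x_1,\ldots,x_\ell$. Since a $(1+1)$ algorithm is stateless modulo the current search point, all phase information must live inside $x$: I will use $x_1,\ldots,x_{\ell-1}$ as a unary progress counter whose ``full'' configuration is $(1,\ldots,1)$, and reserve $x_\ell$ as the final termination flag.

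Two elementary building blocks underlie the strategy. First, a \emph{test flip} on a single bit $c_i \in C$: flipping $c_i$ in isolation changes fitness by exactly $\pm 1$, so by the elitist rule the offspring is accepted iff $c_i$ was non-optimal, and in either branch $c_i$ is optimal afterwards. Hence RLS restricted to $C$ (uniformly sample a bit of $C$ and flip it) optimizes all of $C$ once every bit of $C$ has been sampled, which by coupon-collector happens within $O(k\log k + k\log(1/p))$ iterations with probability at least $1-p/3$. Second, a \emph{counter advance} invokes the overwrite variant of Lemma~\ref{lem:copy} with destination size $1$: to flip a specified zero bit of $x_1,\ldots,x_{\ell-1}$ to $1$, pair its flip with a uniformly random bit of $B$ and accept if the net fitness change is non-negative; this completes in $O(1/\beta)$ expected iterations (and, by the tail bound in Lemma~\ref{lem:copy}, in $O(\log(\ell/p)/\beta)$ iterations with failure probability at most $p/(3\ell)$), consuming at most one non-optimal bit of $B$. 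The same mechanism applied to $x_\ell$ executes the final termination flip.

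The stateless glue is then the following lookup rule on $(x_1,\ldots,x_\ell)$. If $x_\ell=1$, halt. Otherwise, if the counter $(x_1,\ldots,x_{\ell-1})$ is not all ones, then with probability $1-\gamma$ perform one RLS step on $C$ and with probability $\gamma$ perform one iteration of a counter advance on the leftmost zero bit of $(x_1,\ldots,x_{\ell-1})$, for a small positive constant $\gamma$. Once the counter is full and $x_\ell=0$, perform termination-flip iterations on $x_\ell$. Choosing $\ell = C_0\log(1/p)$ for a sufficiently large constant $C_0$ and applying a Chernoff bound on the split between RLS and advance iterations during the first $T$ rounds, together with a union bound over three failure events (RLS incomplete, some advance overruns, termination overruns), one obtains total failure probability at most $p$ for $T = O(k\log k + k\log(1/p) + \ell\log(\ell/p)/\beta) = O(\ell k \log k/\beta)$. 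Exactly $\ell-1$ bits of $B$ are consumed by counter advances and one more by termination, matching the $\ell$-bit budget.

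The main obstacle is the stateless coordination between the two phases: a premature successful flip of $x_\ell$ would halt the algorithm with $C$ still un-optimized. The structural fix is to make ``counter full'' a strict precondition for even attempting the termination flip, so that the deterministic map $(x_1,\ldots,x_\ell)\mapsto\textrm{action}$ enforces the correct phase ordering by construction. Everything else is book-keeping: a direct application of the copy/overwrite analysis of Lemma~\ref{lem:copy} to single-bit destinations, combined with standard coupon-collector tail bounds for the RLS component.
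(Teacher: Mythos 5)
There is a genuine gap, and it lies exactly at the point you identify as ``the main obstacle'' but then resolve incorrectly. You make ``counter full'' a precondition for flipping $x_\ell$, which indeed prevents $x_\ell$ from flipping before the counter is full --- but the dangerous race is not between the counter and $x_\ell$; it is between the counter and the RLS phase on $C$. With a \emph{constant} advance probability $\gamma$, each of the $\ell-1$ counter ticks costs an expected $O(1/(\gamma\beta))$ iterations, so the counter is full after $O(\ell/\beta)$ iterations in expectation (and with high probability), during which only $O(\ell/\beta)$ RLS steps on $C$ have been performed. Since $\ell=O(\log(1/p))$ is independent of $k$, this is far fewer than the $\Omega(k\log k)$ coupon-collector samples needed to optimize $C$ whenever $k$ is large. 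Your algorithm then enters the termination phase and sets $x_\ell=1$ with most of $C$ still unoptimized, so it fails with probability close to $1$ rather than at most $p$. The Chernoff bound ``on the split between RLS and advance iterations during the first $T$ rounds'' does not help: the algorithm does not get to run RLS for $T$ rounds, because the counter --- which is what ends the RLS phase --- fills up after $O(\ell/\beta)$ rounds regardless of $k$.

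The missing idea, which is the heart of the paper's proof, is that the counter advance must be attempted with probability $\Theta(1/(k\log k))$ per iteration rather than with constant probability. Then each counter tick spans a geometrically distributed ``round'' that is, with constant probability at least $4/9$, long enough (at least $2k\log k$ RLS steps) to complete a full coupon-collector epoch on $C$; with $\ell=\Theta(\log(1/p))$ such rounds, at least $\Omega(\ell)$ of them are long with probability $1-e^{-\Omega(\ell)}$, and $C$ fails to be optimized in all of them with probability at most $k^{-\Omega(\ell)}$. In other words, the unary counter is not mere phase bookkeeping: it is a low-resolution clock whose tick rate must be matched to the mixing time of the RLS subroutine, so that ``all $\ell$ bits set'' statistically certifies that enough RLS work has been done. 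This tuning is also what produces the claimed running time $O(\ell k\log k/\beta)$, which your parameter choice would not yield. The rest of your construction (single-bit test flips on $C$, pairing each counter flip with a random bit of $B$, the budget of $\ell$ consumed bits of $B$) matches the paper and is fine.
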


Note that Lemma~\ref{lem:optimize} can be achieved with trivial algorithms (e.g., RLS) if we do not insist that the algorithm marks termination. This is an important part since knowing when a phase has finished will be a crucial ingredient for further algorithm. We remark that the bits $x_1,\ldots,x_\ell$ in Lemma~\ref{lem:optimize} may be replaced by any bits as long as the positions are known. We remark further that the requirement $p = e^{-o(n)}$ can be replaced by $p=\Omega(e^{-cn})$ for some suitably chosen constant $c>0$. For our purposes the claimed setting suffices.

\begin{proof}
We will use the number of one-bits among $x_1,\ldots,x_{\ell}$ as an estimator for the time that we have already spent. In each step with probability $1-1/(3 k \log k)$ we use an RLS step (Randomized Local Search, Algorithm~\ref{alg:RLS}) on $C$. Otherwise, we flip the first of the bits $x_1,\ldots,x_{\ell}$ that is still zero, and we flip simultaneously a random bit in $B$. 

By our assumption made above, at most $\ell$ bits of $B$ will be flipped, so during the the whole algorithm each one of them is non-optimal with probability at least $(\beta|B|-\ell)/|B| \geq \beta/2$. Thus in each step we successfully flip one of the bits $x_1,\ldots,x_{\ell}$ with probability at least $p':=\beta/(2 k \log k)$. By the Chernoff bound, the probability that after $n'= 4\ell k\log k/\beta$ steps we have not flipped all of them is at most
\[
\Pr[\text{Bin}(n',p') \leq \ell] \leq e^{-\Omega(\ell)} \leq p/3
\]
for a suitable choice of $\ell = O(\log (1/p))$. Thus the algorithm terminates after at most $n'$ steps with probability $1-p/3$.

Let us split the execution into $\ell$ rounds, where the $i$-th round is characterized by $x_1=\ldots =x_{i-1}=1$ and $x_i=\ldots=x_{\ell} =0$. Let us call a round which takes at least $2k\log k$ RLS steps on $C$ a \emph{long round}. Since the number of RLS steps in each round is geometrically distributed, a round is long with probability at least 
\begin{align*}
\left(1-\frac{1}{3k \log k} \right)^{2 k \log k} \geq \left(1-\frac{1}{3}\right)^2
=
\frac{4}{9},
\end{align*}
since the function $(1-1/x)^x$ is monotonically increasing in $x\geq 1$. Thus, by the Chernoff bound there are at least $2\ell/9$ long rounds with probability at least $1-e^{-\Omega(\ell)} \geq  1-p/3$ for a suitable choice of $\ell$. On the other hand, the probability that $C$ is not optimized in a long round is at most $1/k$ (this is an application of the coupon collector problem, see~\cite[Theorem 1.23]{Doerr11bookchapter}). So the probability that $C$ is not optimized by any of $\Omega(\ell)$ long rounds is at most $e^{-\Omega(\ell)}\leq p/3$ for a suitable choice of $\ell$. Summarizing, with probability $1-p$, the algorithm succeeds in time at most $O(\ell k \log k/\beta)$.
 \end{proof}

\subsubsection{A Neutral Counter} \label{sec:counter}
Next we show that it is possible to set up a counter in a way that increasing the counter does not affect the \onemax values of the string. The counter can be implemented in the (1+1) elitist black-box model, and is hence applicable in any $(\mu+\lambda)$ elitist black-box model. 

\begin{lemma}[Neutral Counter]
\label{lem:11counter}
For every $0<p<1/2$, $p = e^{-o(n)}$, there is $\ell= O(\log(1/p))$ such that the following holds. Let $x$ be a bit string in $\{0,1\}^n$ such that $x_1=x_2 = \ldots = x_{\ell+2}=0$ and $(x_{\ell+3}, \ldots, x_n)$ is uniformly distributed in $\{0,1\}^{n-\ell+3}$. 

Then there exists a (1+1) elitist black-box strategy that with probability at least $1-p$ implements in $x$ a counter which can be used during future iterations without changing the \onemax value of the string. 
For counting up to $j = O(n)$, the counter requires a total number of $O(\log j)$ bits that are blocked in all iterations in which the counter is active. 
The setup of the counter requires 
$\mathord{O}\mathord{\left(\ell \log j \log\log j\right)}$ function evaluations. 
During the setup of the counter, 
$O(\log j + \ell)$ random bits of the remainder of the string are optimized by copy operations as in Lemma~\ref{lem:copy}.
\end{lemma}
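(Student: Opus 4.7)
The plan is to represent the counter by $k = \lceil \log_2 j \rceil$ \emph{calibrated pairs} of bit positions $(c_1, c_1'), \ldots, (c_k, c_k')$, maintained under the invariant that within each pair exactly one of $x_{c_i}, x_{c_i'}$ agrees with the unknown target string $z$. Flipping both bits of such a pair simultaneously is then a $\onemax_z$-neutral move, because the matching bit becomes non-matching and vice versa, so the elitist selection rule (with ties broken in favor of the offspring) accepts it. The $2^k \ge j$ counter states are encoded as the $2^k$ joint pair-configurations, and any increment $s \to s+1$ is realized in a single mutation that toggles every pair indicated by $s \oplus (s+1)$ (or by a single pair under a Gray code); this is a disjoint union of neutral pair-flips and therefore itself neutral. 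The counter value can be read directly from $x$ as $(x_{c_1}, \ldots, x_{c_k})$, and this reading is consistent because each toggle flips $x_{c_i}$; crucially, no knowledge of $z$ is required to either read or increment.

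For the setup I would first carve out a trading pool $B$ of size $\Theta(n)$ from the random tail $(x_{\ell+3}, \ldots, x_n)$. Since its bits are uniform, a Chernoff bound yields that at least a constant fraction $\beta > 0$ of $B$ is non-optimal, with the positions of non-optimal bits uniformly distributed in $B$, except on an event of probability $e^{-\Omega(n)} \le p/4$; this precondition lets me invoke Lemma~\ref{lem:copy} and Lemma~\ref{lem:optimize}. Outside $B$ I reserve the $2k$ counter-pair positions in a single block $C$ with $|C| = 2k = O(\log j)$. I then invoke Lemma~\ref{lem:optimize} with target block $C$, pool $B$, and failure probability $p/4$; it returns a (1+1) elitist strategy that optimizes $C$ (so that afterward $x_c = z_c$ for every counter-pair bit) and marks termination by flipping a designated flag bit. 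This call costs $O(\ell \cdot |C| \cdot \log |C| / \beta) = O(\ell \log j \log\log j)$ function evaluations, optimizes $O(\ell)$ random bits of $B$, and fails with probability at most $p/4$.

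After this \emph{learning} stage the algorithm detects the completion flag in $x$ and enters the \emph{calibration} stage, which walks through the pairs $i = 1, \ldots, k$ and, for each, proposes the mutation that flips $x_{c_i}$ together with a uniformly chosen bit of $B$. This mutation is accepted precisely when the sampled $B$-bit is non-optimal (probability $\ge \beta/2$ throughout, since at most $k = O(\log j)$ further trading bits are consumed), after which $x_{c_i}$ has moved to $\neg z_{c_i}$, its partner $x_{c_i'}$ still equals $z_{c_i'}$, and the pair is calibrated. A second copy of Lemma~\ref{lem:optimize}'s flag mechanism, applied to a private sub-block of the remaining flag region, drives the calibration stage and marks its termination; the total number of attempts is $O(\ell \log j)$ with overall failure probability at most $p/4$. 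Summing the two stages gives the claimed $O(\ell \log j \log\log j)$ function evaluations and $2k + k + O(\ell) = O(\log j + \ell)$ random bits of the remainder optimized, and a final union bound across the Chernoff event for $B$ and the two stage-failure events yields the overall success probability $1 - p$.

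The main obstacle is precisely the bookkeeping: a (1+1) elitist algorithm is stateless (it acts only on the current search point $x$ and the observed comparison), so the transition from the learning stage to the calibration stage, and the termination of the latter, have to be communicated back to the algorithm through the bit string itself. This forces the use of the flag-bit machinery of Lemma~\ref{lem:optimize}, whose reliability depth contributes the $\log\log j$ overhead in the cost; conversely, the reason one cannot use a naive binary counter is that an arbitrary binary increment would generically alter $\onemax_z(x)$ and hence be rejected by elitist selection, which is precisely what the calibrated-pair construction sidesteps.
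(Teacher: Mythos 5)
Your high-level architecture matches the paper's: reserve $O(\log j)$ dedicated positions, learn their optimal values via Lemma~\ref{lem:optimize} with the random block $B$ as payoff (same cost accounting, $O(\ell\log j\log\log j)$ evaluations and $O(\log j+\ell)$ consumed bits of $B$), then deliberately de-optimize half of them so that state transitions become fitness-neutral. Your encoding (calibrated pairs, one correct bit per pair, read off as raw bits) differs from the paper's (all states are $k$-bit patterns with exactly $k/2$ correct positions, enumerated as $r_1,\dots,r_j$), and the neutrality argument for increments is sound in both.

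The genuine gap is in your calibration stage, and it is exactly the difficulty the paper's extra block $C'$ exists to solve. A (1+1) elitist algorithm must read all of its state from the current string, but a calibrated pair $(\neg z_{c_i}, z_{c_i'})$ and an uncalibrated pair $(z_{c_i}, z_{c_i'})$ are indistinguishable without knowing $z$, so "walk through the pairs $i=1,\dots,k$" cannot be implemented: the algorithm cannot tell which pair to attack next, nor when it is done. The fallback you invoke, the flag mechanism of Lemma~\ref{lem:optimize}, does not help, because that mechanism relies on RLS making monotone, self-correcting progress toward the \emph{optimal} configuration; here the target is anti-optimal, and a repeated attempt on an already-calibrated pair (flip $x_{c_i}$ plus a random bit of $B$) is fitness-nondecreasing whenever the $B$-bit is non-optimal and hence gets accepted, un-calibrating the pair — the random walk is biased \emph{away} from your target. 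Per-pair progress flags would need $k=\Theta(\log j)$ writable bits, which the hypothesis does not provide for constant $p$ (only $\ell+2=O(1)$ zero bits are guaranteed). A further, smaller issue: after calibration the raw reading is $(\neg z_{c_1},\dots,\neg z_{c_k})$, an unknown string, so the counter starts at an unknown offset; this is repairable by one neutral toggle that zeroes all $x_{c_i}$, but only once the algorithm can certify that calibration is complete, which returns to the same readability problem. The paper's fix is to keep a reference copy $C'$ of the learned optimal entries of $C$ and to drive the de-optimization with the copy routine of Lemma~\ref{lem:copy}, whose progress is readable at every step by comparing $C$ against the target pattern computed from $C'$; some such reference (or an equivalent readable invariant) is the missing ingredient in your construction.
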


\begin{proof}
In all that follows we use a partition $C$, $C'$, and $B$ of $[n]\setminus[\ell+2]$, thus splitting the string $x$ (minus the first $\ell+2$ bits) into three parts, which by abuse of notation we also call $C$, $C'$, and $B$. The sizes of $C$ and $C'$ are $k = O(\log n)$ each (see below), so the size of $B$ is $n-o(n)$. By assumption, the entries in $B$ are initialized uniformly at random. Note that by the Chernoff bound, with exponentially high probability at least a $1/3$ fraction of the bits in $B$ are non-optimal. We will henceforth assume that this is the case (giving the algorithm a failure probability of $e^{-\Omega(n)} \leq p/2$). As we will see, the counter algorithm will use at most $2|C|+\ell+2$ ``payoff bits'' which are flipped from a non-optimal into the correct state, so that at any point time during the algorithm at least a $1/3 -o(1)$ fraction of the bits in $B$ will be non-optimal.

Let $k$ be the smallest even integer such that $\binom{k}{k/2} \geq j$. 
Note that $k= O(\log j)$, since $\binom{k}{k/2}\geq 2^k/\sqrt{\pi k}$ by Stirling's formula. We use Lemma~\ref{lem:optimize} to optimize block $C$ with probability $1-p/2$, using $x_3,\ldots,x_{\ell+2}$ as flag bits. 

Once we have a string in the memory which satisfies $x_1=x_2=0$ and $x_{\ell+2}=1$, we assume that part $C$ is optimized and we copy the entries of $C$ into part $C'$ using Lemma~\ref{lem:copy} with part $B$ as payoff bits. As soon as $C$  has been copied into $C'$, we want to change the second flag bit. We do this by flipping $x_2$ plus a random bit in $B$ until the corresponding string is accepted. The flag $011$ in the first three positions tells us to move on to initializing the counter.

We fix an enumeration of all the $\binom{k}{k/2}$ possible ways to set exactly $k/2$ out of the $k$ entries to their correct values. Let $r_1, \ldots, r_j$ be the first $j$ strings corresponding to this enumeration. For initializing the counter to one we copy the string $r_1$ into $C$ by applying Lemma~\ref{lem:copy}, again with part $B$ as payoff bits. When we have initialized the counter, we finally flip the flag bit $x_1$ (together with a random bit in $B$) to indicate that the counter is ready. 

Note that throughout the whole algorithm, we use at most $2|C| +\ell+2$ payoff bits, as we claimed at the beginning of the proof. Note also that if at least $n/3$ of the bits in $B$ are non-optimal, the second and the third phase are Las Vegas operations (they can never fail, but the time needed for these phases is random).

Since the optimal entries of $C$ are stored in $C'$ (the bits in $C'$ will not be touched as long as the counter is active), we can at any time read the value of the counter by comparing $C$ with $C'$.  
Similarly, if we want to increase the counter from some value $i$ to $i+1$, we flip simultaneously those bits of $C$ in which $r_i$ and $r_{i+1}$ differ. Since there are exactly $k/2$ ones in either of the two strings $r_i$ and $r_{i+1}$, this does not affect the \onemax-value of the string.  
\end{proof}

\subsubsection{Optimizing in Linear Time with Non-Optimal Bits}
\label{sec:tradingbits}
The following lemma allows us to optimize a large part of the string in linear time, provided that we have some small area $B'$ with ``trading bits'', i.e., with bits that are non-optimal.

\begin{lemma}
\label{lem:tradingbits}
Let $0 <\alpha < 1$ be constant. Assume we have two counters $C,C'$ that can count up to $n$ and a flag bit $b$ that is set to $0$. Assume further that we have two blocks $B,B'$, with $|B|, |B'| = \omega(\log n)$ such that all bits in $B'$ are non-optimal, and that at least an $\alpha$ fraction of the bits in $B$ is non-optimal, their positions distributed uniformly at random. Then there is a (1+1) elitist black-box algorithm that optimizes $B$ and $B'$ in linear time with probability $1-o(1/n)$. 
\end{lemma}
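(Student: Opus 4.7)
The plan is to iterate through the bits of $B$ using counter $C$, and at each step use one trading bit from $B'$ (indexed by counter $C'$) as a fitness buffer. At the current state with counter values $(C,C')=(i,j)$, the compound move queries the point $y$ obtained from $x$ by flipping bit $i$ of $B$, flipping the $j$-th trading bit of $B'$, and neutrally advancing both counters. Since every bit in $B'$ is non-optimal, flipping the indexed trading bit contributes $+1$ to the fitness; combined with the $\pm 1$ contribution from bit $i$ of $B$, the net fitness change is either $+2$ (when bit $i$ is non-optimal) or $0$ (when bit $i$ is optimal), and both cases are acceptable in the elitist model (the latter via tie-breaking).

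First I would analyze the net effect of a single pass of $|B|$ such compound moves. In the $+2$ events (bit $i$ originally non-optimal), bit $i$ becomes optimal, which is genuine progress. In the $0$ events (bit $i$ originally optimal), bit $i$ becomes non-optimal but one trading bit of $B'$ has been paid off. Thus one pass consumes exactly $|B|$ trading bits from $B'$ and decreases the total number of non-optimal bits in $B\cup B'$ by exactly twice the number of $+2$ events; by the uniform-random positions of the non-optimal bits in $B$ together with a Chernoff bound, this number is $\alpha|B|(1\pm o(1))$ with probability $1-e^{-\Omega(|B|)}$. To actually drive $B$ to optimal, the algorithm iterates the procedure: bits of $B$ that became non-optimal during an earlier pass are now themselves available as trading bits for subsequent passes, and the flag bit $b$ together with the state of the counter $C'$ is used to switch between ``use $B'$ as trading source'' and ``use recovered $B$-bits as trading source''.

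A second phase cleans up the remaining non-optimal bits of $B'$ (and of $B$, if any) by flipping them individually: each such flip gains $+1$ in fitness and is automatically accepted, so iterating over the known positions via a dedicated counter pass takes $O(|B'|)$ queries. The total query count across both phases is $O(|B|+|B'|)=O(n)$, and the high-probability bound $1-o(1/n)$ follows from a union bound over the Chernoff deviations in the individual passes together with the $\omega(\log n)$ lower bound on $|B|$ and $|B'|$, which makes the tail probabilities $n^{-\omega(1)}$.

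The hard part will be orchestrating the tie-breaking choices and the state transitions so that the $(1+1)$ elitist algorithm---which has no explicit memory beyond the current search point $x$---correctly sequences the passes, handles the handover between the two trading-source modes, and terminates cleanly once both blocks are fully optimal. All of this bookkeeping must be encoded in the counters $C,C'$ and the single flag bit $b$, and designing each compound move so that its tie-break outcome is both provably accepted and moves the algorithm to a next state distinguishable from the current one (so that the algorithm does not loop at a fixed $x$) is the main technical challenge.
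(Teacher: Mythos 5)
Your compound move spends one trading bit of $B'$ for \emph{every} bit of $B$, unconditionally, so even a single pass requires $|B'|\geq |B|$. The lemma only guarantees $|B'|=\omega(\log n)$, and in the intended application (the proof of Theorem~\ref{thm:11upper}) one has $|B'|=O(\log^2 n)$ while $|B|=n-o(n)$: your scheme exhausts the trading bits after $O(\log^2 n)$ steps. The whole point of the lemma is that a pool of only $\omega(\log n)$ trading bits suffices, and this forces the move to be designed so that trading bits are \emph{recovered} when the current bit of $B$ is non-optimal (probability at least $\alpha$) and only \emph{spent} when it is optimal. The paper's move pairs the flip of the next $B$-bit with \emph{re-breaking} an already repaired bit of $B'$, so the net fitness change is $0$ or $-2$; the offspring is accepted exactly when the $B$-bit was non-optimal, and acceptance regenerates one trading bit. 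This is mixed with a small probability $p$ of a move that spends a trading bit to force the current $B$-bit into the non-optimal state. Choosing $p$ with $2p/(1-p)<\alpha$ makes the expected number of trading bits consumed per position negative, and the Negative Drift Theorem (Theorem~\ref{thm:negativedrift}) then shows the budget $|B'|$ is never exceeded with probability $1-o(1/n)$.

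Your multi-pass repair also does not converge. Both the $+2$ and the $0$ outcome of your compound move produce the \emph{same} offspring and both are accepted, so the single stored search point retains no record of which bits of $B$ were originally optimal; after one pass exactly those bits are non-optimal, their locations are unknown to the algorithm, and the proposed handover to ``recovered $B$-bits as trading source'' cannot pair a bit that needs fixing with a bit known to be non-optimal. Pairing two $B$-bits blindly either fixes both ($+2$) or merely swaps their states ($0$), so the number of non-optimal bits in $B$ need not decrease. The information-carrying \emph{rejection} --- the state fails to advance precisely when the tested bit was already optimal --- is the ingredient your always-accepted move gives up, and it is what lets the paper finish each bit in expected $O(1)$ queries with a net gain of trading bits.
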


\begin{proof}
We start with the counters $C$ and $C'$ at $0$, and go through the bits in $B$ one by one, maintaining the following invariants. When $C$ is at $i$ then the first $i$ bits in $B$ are optimal. When $C'$ is at $i'$ then the first $i'$ bits in $B'$ are optimal, and all further bits in $B'$ are non-optimal. We will call the non-optimal bits in $B'$ \emph{trading bits}.

Choose $0<p<1$ so small that $2p/(1-p)-\alpha < 0$. Assume $C$ is at position $i$ and $C'$ at position $i'$. If $i'=0$ then we simply flip the first bit in $B'$ and increase $C'$, so assume $i'>0$ from now on. In each step we flip a coin. If it turns head (with probability $1-p$), then we flip the $i+1$-st bit in $B$, increase $C$, flip the $i'$-th bit of $B'$ and decrease $C'$. If the offspring has equal fitness, we accept it. Note that the fitness is equal if and only if the bit in $B$ was non-optimal, and that we recover one of the non-optimal trading bits in $B'$ in this case. On the other hand, if the bit in $B$ was optimal in the original string then the fitness of the new search point is strictly smaller than the previous one so that the offspring is immediately discarded. So we only accept an increase in $C$ if the $i$-th of $B$ is correct in the new string. 
If the coin flip was tail (with probability $p$) then we just flip the $i+1$-st bit in $B$, flip the $i'+1$-st bit in $B'$, and increase $C'$ (but do not touch $C$). Note that we may (and will) accept the offspring in any case. 

Evidently, we maintain the invariant mentioned above. Moreover, we spend only an expected constant number of iterations for optimizing a bit in $B$, and by the Chernoff bound the algorithm optimizes $B$ in at most $c|B|$ with probability $1-e^{-\Omega(|B|)} = 1-o(1/n)$, for a suitable $c>0$. Once $B$ is optimized (i.e., once the counter $C$ is at position $|B|$), we flip all non-optimal bits in $B'$ in one step. The only way the algorithm can fail (except by taking too long, which only happens with probability $1-o(1/n)$) is by running out of trading bits, so it remains to show that with high probability this does not happen.

Let $X_i$ the number of trading bits that are used up after the $i$-th round, and let $\Delta_i := X_{i+1}-X_{i}$ be the number of trading bits that we spend in this round. For the sake of exposition, assume first that there is an unlimited number of trading bits that can be gained or used in this round (while in fact, the total number of trading bits must stay between $0$ and $B'$). If the $i$-th bit of $B$ was optimal then the algorithm waits for tails to proceed. This costs us one trading bit and brings us into the position that the $i$-th bit of $B$ is non-optimal. In that position, we either (with probability $1-p$) proceed to the $i+1$-st bit and gain a trading bit, or we proceed (with probability $p$) to the other position, pay a trading bit, and pay another one to return to the old position. So if the $i$-th bit is initially non-optimal, then the expected number of trading bits that we spend for optimizing the $i$-th bit is
\begin{align*}
& \E[\Delta_i \mid \text{$i$-th bit non-optimal}]  = -(1-p)+p(2+\E[\Delta_i \mid \text{$i$-th bit non-optimal}]),
\end{align*}
from which we easily deduce $\E[\Delta_i \mid \text{$i$-th bit non-optimal}] = -1+2p/(1-p) $ and $\E[\Delta_i \mid \text{$i$-th bit optimal}] = 1+\E[\Delta_i \mid \text{$i$-th bit non-optimal}] = 2p/(1-p) $. The probability that the $i$-th bit is non-optimal is at least $\alpha$, and so 
\[
\E[\Delta_i] < \alpha \left(-1+\frac{2p}{1-p}\right)+(1-\alpha)\frac{2p}{1-p} = \frac{2p}{1-p}-\alpha< 0.
\] 
Now we examine how the drift changes if the number $X_i$ of trading bits that we can gain in the $i$-th round is bounded. Since the probability to spend more than $b$ trading bits in one round goes (geometrically) to zero as $b\to \infty$, there is a constant $b_0>0$ and a constant $\eps>0$ such that 
\[
\E[\Delta_i \mid X_i \geq b_0 ] \leq -\eps
\]
Therefore, the number $X_i$ of used trading bits performs a random walk with negative drift while it is between $b_0$ and $|B'|$. Moreover, the probability $\Pr[|\Delta_i| \geq j]$ decreases geometrically in $j$. Therefore, by the Negative Drift Theorem~\ref{thm:negativedrift} (with constant $r(\ell)$ and $\ell = |B'|-1-b_0 = \omega(\log n)$) the probability that any of the $X_i$ exceeds $|B'|-1$ for $1\leq i\leq |B|$ is at most $e^{-\Omega(|B'|-1-b_0)} =o(1/n)$, so we are not out of trading bits \emph{after} any round. But in every round we gain at most one trading bit, so if $X_i$ does not exceed $|B'|-1$ then at no point during the $i$-th round the algorithm uses more than $|B'|$ trading bits. This proves that we never run out of trading bits with probability $1-o(1/n)$.
\end{proof}

We remark without proof that Lemma~\ref{lem:tradingbits} can be strengthened to hold with probability $1-p$ for any $p=e^{-o(n)}$ if $|B|, |B'| = \omega(\log(1/p))$.

\subsection{Proof of Theorem~\ref{thm:11upper}}
\label{sec:11upperProof}

\begin{proof}
We split the string into four parts: firstly a constant number of flag bits indicating in which phase of the algorithm we are. Some of them we use for the subroutines, but bit $b_0$ is kept to be $0$ until the very last phase. Then two counters $C,C'$ that can count up to $n$. Further, we have a part $B'$ of the string of size $O(\log^2 n)$ which we use as trading bits, and the remaining part $B$.

Now we put all pieces together. We initialize the flag bits as $0$, and initialize $B$ uniformly at random. Then we build the counters as described in Lemma~\ref{lem:11counter}, using the randomness from $B$, and indicate with a flag bit when we are finished. We split $B'$ into two parts $B_1'$ and $B_2'$ of equal size. Then we use Lemma~\ref{lem:optimize} to optimize $B_1'$ with high probability, setting a flag bit when finished.  
When this flag is set, we copy $B_1'$ into $B_2'$, and then we copy the string $B_2' \oplus (1,\ldots,1)$ into $B_1'$, effectively inverting all bits in $B_1'$. For both copy operations we use the randomness from $B$. Note that afterwards we still have an $1/2-\eps$ fraction of non-optimal bits in $B$ (using Chernoff bounds and the fact that all copy operations together touch $o(n)$ bits), and that all bits in $B_1'$ are non-optimal. Thus we can apply Lemma~\ref{lem:tradingbits} to optimize $B, B_1'$ and $B_2'$ in linear time with probability $1-o(1/n)$. In the last step of this phase, we also set $x_0$ to $1$. (We can do this since the last operation flips all the non-optimal bits in $B_1'$).

While $x_0$ is $1$, we do the following. With probability $1-\ln n/n$ we flip a bit outside of $B \cup \{x_0\}$. With probability $\ln n/n$ we flip $x_0$. Note that there are at most $O(\log^2 n)$ bits outside of $B$, so this region (except of $x_0$) will be optimized after an expected number of $O(\log^2 n \log \log n)$ steps. Moreover, by the Coupon Collector Theorem the probability that it takes more than $c\log^2n \log \log n$ steps to optimize $B$ is at most $1/n$ for at suitable constant $c\geq 1$~\cite[Theorem 1.23]{Doerr11bookchapter}. Hence, when $x_0$ is flipped, then with probability at least $1-\log^3n/(n\log \log n)$ we have found the optimum.  On the other hand, with probability $1-(1-\ln n/n)^n\geq 1-1/n$ this phase takes at most $n$ steps. This concludes the proof.
\end{proof}

\begin{remark}\label{rem:iteratedcounter}
In the proof of Theorem~\ref{thm:1lambdaupper} the main part is Las Vegas. We have failure probabilities only for initializing the counter, by running out of trading bits, and for the optimization of the bits which are reserved for the counter. The first two failure probabilities can be made superpolynomially (in fact, exponentially) small. The failure probability stemming from the last phase can be decreased by using an \emph{iterated counter}, which is used to reduce the number of bits that are blocked for the operation of the counter. 

More precisely, we start with the counter described above, which can count from one to $j_1=j$. A second counter is implemented, again using Lemma~\ref{lem:11counter}, to count from $1$ to $j_2 = \Theta(\log j_1)$, a third one for counting from one to $j_3 = \Theta(j_2)$, and so on until the size of the bits that need to be blocked for the counter is at most constant. Then we optimize the main part of the string as before, but making sure that $j_1+j_2+\ldots = O(\log^2 n)$ bits remain that are all non-optimal. With these bits, we can optimize the region of the first counter without error probability, using the second counter and $j_1$ of the non-optimal bits. Then we optimize the region of the second counter using the third counter, and so on until we end up with a counter that has only constant size. This counter we then optimize with RLS steps as described in the proof. Effectively, this allows us to design an algorithm (by flipping the last bit with probability $\ln n/n$) that needs time $O(n)$ with probability $1-O(\log n/n)$. 

Alternatively, although it gives neither a Monte Carlo nor a Las Vegas complexity, note that there is an algorithm (by flipping the last bit with probability $1/n$) for which there is an event $\mathcal{E}_{\text{bad}}$ of Probability $\Pr[\mathcal{E}_{\text{bad}}]= O(1/n)$ (namely, the event that either initialization fails, or that the last bit is flipped too early) such that conditioned on $\neg \mathcal{E}_{\text{bad}}$ the algorithm has an expected runtime of $O(n)$.
\end{remark}

\begin{remark}
Note that if we had only one bit of additional memory, then we could use it as an indicator bit for random local search: in any step of the algorithm, we could with some small probability (e.g., with probability $1/(n\log n)$) flip this bit, and then proceed with random single bit flips from this point on. If the success probability of the Monte Carlo algorithm is at least $1-O(1/\log n)$ (we proved much stronger bounds), then this results in a Las Vegas algorithm with linear expected runtime. Unfortunately, it is unclear how to make use of high success probabilities without an additional bit of memory, so our results do not imply a linear Las Vegas runtime. 
\end{remark}

\section{The \texorpdfstring{$(1+\lambda)$}{1+l} Elitist Black-Box Complexities of OneMax}
\label{sec:1lambdaupper}

We have already seen in Section~\ref{sec:21upper} that a slight increase of the population size of the elitist black-box model can significantly simplify the \onemax problem. 
In the (2+1) model considered in Section~\ref{sec:21upper} we were in the comfortable situation that we could use the two strings of the memory to encode an iteration counter. 
In this section we regard the $(1+\lambda)$ elitist black-box model. Intuitively, this model is less powerful than the $(\lambda+1)$ model since we have to base our sampling strategies solely on the one search point in the memory. Still the model allows to check and compare several alternatives at the same time, so it should be considerably easier than the (1+1) situation. 
The core idea of the following theorem is to divide the bit string into blocks of size $\log \lambda$ each and to optimize these blocks iteratively by exhaustive search. 

\begin{theorem}
\label{thm:1lambdaupper}
Let $\eps, C >0$, and let $1<\lambda< 2^{n^{1-\eps}}$. For suitable $p = O(\log^2 n\log \log n\log \lambda/n)$ there exists a $(1+\lambda)$ $p$-Monte Carlo elitist black-box algorithm that needs at most 
$O(n/\! \log \lambda)$ generations on \onemax.
\end{theorem}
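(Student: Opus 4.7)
The plan is to extend the strategy of Theorem~\ref{thm:11upper}, exploiting the fact that with $\lambda$ offspring per generation a $(1+\lambda)$ algorithm can brute-force search over all $2^{\lfloor \log\lambda\rfloor} \leq \lambda$ configurations of a $\lfloor\log\lambda\rfloor$-bit block in a single generation. I would partition the bit string into (i) a constant number of flag bits, (ii) a neutral counter region $C$ capable of counting up to $\lceil n/\log\lambda\rceil$, (iii) a trading region $B'$ of size $O(\log^2 n)$, and (iv) the remaining working region $W$ of size $n-O(\log^2 n)$, which is further subdivided into consecutive blocks $W_1,W_2,\ldots$ of size $k:=\lfloor\log\lambda\rfloor$.

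The initialization phase would run the tools of Section~\ref{sec:toolsupper} in simulated $(1+1)$ mode (a $(1+\lambda)$ algorithm can always ignore all but one of its offspring, so every $(1+1)$ elitist procedure lifts verbatim): build the neutral counter via Lemma~\ref{lem:11counter}, prepare non-optimal trading bits in $B'$ via Lemmata~\ref{lem:copy} and~\ref{lem:optimize}, and signal completion of each sub-phase with flag bits. By those lemmata initialization needs $O(\log^2 n\log\log n)$ generations and fails with probability at most $e^{-\Omega(\log n)}$, which is absorbed into the stated bound $p$.

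For the main phase I would proceed inductively: at the start of iteration $i$ the counter is at $i-1$ and blocks $W_1,\ldots,W_{i-1}$ are already optimal. The $j$-th offspring ($1\leq j\leq \lambda$) is generated by simultaneously (a) setting $W_i$ to the $j$-th of the $2^k$ possible patterns and (b) flipping the bits in $C$ that realize the fitness-neutral transition of the counter from $i-1$ to $i$ guaranteed by Lemma~\ref{lem:11counter}. Because the counter transition leaves the \onemax value unchanged, exactly one of the $\lambda$ offspring realizes the globally optimal configuration of $W_i$ and strictly dominates all other offspring (or ties the parent iff $W_i$ was already optimal, in which case breaking ties in favour of offspring keeps the counter advancing). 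Elitist selection therefore deterministically completes one block per generation, so that $W$ is optimized after $\lceil |W|/k\rceil = O(n/\log\lambda)$ generations, conditional on a successful setup.

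The remaining $O(\log^2 n)$ bits (the counter, trading, and flag regions) must then be cleaned up: I would flip a designated end-of-main flag and afterwards, in each generation, either try $\lambda$ parallel single-bit flips on the unresolved region or, with probability of order $\log\lambda\cdot\log^2 n \log\log n/n$, flip a terminal flag bit that signals ``done''. The $(1+\lambda)$ parallelism cleans $O(\log^2 n)$ bits in $O(\log^2 n \log\log n/\log\lambda)$ generations with high probability (mirroring the coupon-collector argument in the proof of Theorem~\ref{thm:11upper}), while a union bound shows that the terminal flag is flipped prematurely with probability at most $p = O(\log^2 n\log\log n\log\lambda/n)$. The chief obstacle, exactly as in the $(1+1)$ case, is to calibrate this terminal-flag probability so that it is simultaneously large enough that the flag is flipped almost surely once cleanup is genuinely finished, yet small enough that it is almost surely not flipped during the $O(n/\log\lambda)$ generations of the main phase; the factor-$\log\lambda$ speed-up of cleanup is precisely what permits the slightly looser failure bound $p$ of the theorem compared to Theorem~\ref{thm:11upper}.
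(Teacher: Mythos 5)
Your proposal follows essentially the same route as the paper: a neutral counter (Lemma~\ref{lem:11counter}), deterministic exhaustive search over blocks of $\lfloor\log_2\lambda\rfloor$ bits with the counter advanced fitness-neutrally in every offspring, and a final RLS-style cleanup of the counter region terminated by a flag bit that is flipped with a small per-generation probability. Two small points where your version drifts from what is needed: the trading-bit region $B'$ of size $O(\log^2 n)$ is superfluous here --- the whole reason the $(1+\lambda)$ case is easier than the $(1+1)$ case is that exhaustive block search optimizes each block with certainty in one generation, so no trading bits are ever spent --- and carrying those $\Theta(\log^2 n)$ extra unresolved bits into the cleanup phase hurts you, because with single-bit-flip offspring an elitist $(1+\lambda)$ algorithm fixes at most one bit per generation (the claimed $O(\log^2 n\log\log n/\log\lambda)$ cleanup time is therefore too optimistic), which together with your choice of terminal-flag probability $q=\Theta(\log\lambda\log^2 n\log\log n/n)$ pushes the premature-termination probability $q\cdot T_{\mathrm{clean}}$ above the stated $p$. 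Dropping $B'$ entirely (so the cleanup region is only the $O(\log n)$ counter bits) and taking $q=\Theta(\log n\log\lambda/n)$, as the paper does, recovers exactly the claimed $p=O(\log^2 n\log\log n\log\lambda/n)$; the rest of your argument is sound.
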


We emphasize that the bound in Theorem~\ref{thm:1lambdaupper} is in number of generations, not in terms of function evaluations. We feel that this is the more useful measure, in particular when the $\lambda$ offspring can be generated in parallel. Note that an algorithm optimizing for the number of function evaluations can be substantially different from the ones minimizing the number of generations. 

\begin{proof}[Proof of Theorem~\ref{thm:1lambdaupper}]
We initialize the algorithm by implementing the (1+1) elitist counter as described in Lemma~\ref{lem:11counter}. 
We split the remaining string into blocks of length at most $\lfloor \log_2 \lambda \rfloor \geq 1$, and we want to optimize each block with exhaustive search. There are $j=\lceil n/ \lfloor \log_2 \lambda \rfloor \rceil = \Omega(n^{\eps})$ such blocks, and we thus apply Lemma~\ref{lem:11counter} with this $j$. 
This requires $O(\log (1/p) \log j(\log\log j)^2) = O(\log^3 n)$ generations. 
The counter blocks $O(\log j)$ bits which we cannot touch during the optimization of the blocks (except, of course, for operating the counter).

We then optimize the $n-O(\log j)$ bits which are not blocked for the counter. We optimize $\lfloor \log_2 \lambda \rfloor$ bits in each iteration, by sampling all possible $2^{\lfloor \log_2 \lambda \rfloor} \leq \lambda$ entries in the block. In each sampled offspring the counter is increased by one (when compared with the counter of the parent individual). In the last generation we possibly optimize a block that is smaller than $\lfloor \log_2 \lambda \rfloor$, but the routine is the same, i.e., exhaustive search. Note that this optimization routine is deterministic. It requires at most $j$ generations.

Once the counter of the parent individual shows $j$ we need to optimize those bits that were reserved for the counter. We do this in the same way as we did in the (1+1) situation (see Section~\ref{sec:11upperProof}). That is, we use Randomized Local Search (RLS, Algorithm~\ref{alg:RLS}) on the yet unoptimized part and with some probability $p'=O(\log n\log\lambda/n)$ we flip the bit $b_0$ indicating us to do RLS steps. At the time that bit $b_0$ is flipped for the first time, the remainder of the bit string is optimized with probability at least $1-p/2$ for a suitable choice of $p'$, and the probability that it needs more than $C n/\! \log \lambda$ steps is $O(1/n) = o(p)$ for a suitable choice of $C>0$.
\end{proof}

We remark without formal proof that the requirement on $p$ can be relaxed by regarding an iterated counter (cf.\! Remark~\ref{rem:iteratedcounter}). If $\lambda$ is a small constant, then we may use $p=O(\log n/n)$ as in the (1+1) case. On the other hand, if $\lambda$ is a sufficiently large constant, then we can optimize the constantly many bits of the last counter and $b_0$ simultaneously in just one step. In this case, we may even use $p=e^{-o(n)}$, i.e., for all such $p$ there are $(1+\lambda)$ $p$-Monte Carlo black-box algorithms using only $O(n/\! \log \lambda)$ generations. Despite these small failure probabilities it is still not clear how to derive an upper bound on the corresponding Las Vegas complexities.

\section{The \texorpdfstring{$(\mu+1)$}{(m+1)} Elitist Black-Box Complexities of OneMax for \texorpdfstring{$\mu>2$}{m>2}}
\label{sec:mu1upper}

As mentioned earlier, the $(\mu+1)$ model is quite powerful as it allows to store information about the search history. We shall use this space to implement a variant of the \emph{random sampling} optimization strategy of Erd{\H{o}}s and R{\'e}nyi~\cite{ErdR63} (see Section~\ref{sec:onemaxbackground}).
To apply this \emph{random sampling} strategy in our setting, we need to make this approach satisfy 
the ranking-basedness condition, 
the memory-restriction, and 
the elitist selection requirement. 
Luckily, the first two problems have been solved in previous works, though not for both restrictions simultaneously (see Section~\ref{sec:onemaxbackground}).

In the elitist model we do not obtain absolute fitness values but merely learn the ranking of the search points induced by the fitness function. It has been shown in~\cite{DoerrW14ranking} that the ranking-restriction does not change the complexity of the random sampling strategy by more than an at most (small) constant factor. 
That is, there exists a function $t(n) = O(n/\! \log n)$ such that for $n$ large enough the ranking of a sequence $s_1, \ldots, s_{t(n)}$ of random strings in $\{0,1\}^n$ induced by the \onemax-function uniquely determines the target string with probability at least $1-O(\sqrt{n}\exp(-\Delta \sqrt{n}/\log n))$, where $\Delta$ is some positive constant.

By the restricted memory we may not be able to store all $t(n)$ search points. But, following previous work (see for example~\cite{DoerrW14arity} for a description of this method invented in~\cite{DoerrJKLWW11}), we can split the string into smaller blocks of size $m$ each such that $t(m)\leq \mu$.
We then optimize these $n/t(m)$ blocks iteratively. Note that this is different from the strategy in Section~\ref{sec:1lambdaupper}, where all $2^{t}$ possible entries for a block of size $t$ are sampled.
 
The last challenge that we need to handle is the elitist selection. Intuitively, if we replace after the $i$-th phase (in which we sampled the required search points for optimizing the $i$-th block) the entries in the $i$-th block by the optimal ones, this should give us enough flexibility (in terms of fitness increase) to replace the entries in the $(i+1)$-st block by the random samples $s_1, \ldots, s_t$ needed to determine the optimal entries of the $(i+1)$-st block. The theorem below shows that this is indeed possible, with high probability.

\begin{theorem}
\label{thm:mu1upper}
For constant $\mu$, the $(\mu+1)$ (Monte Carlo and Las Vegas) elitist black-box complexity of \onemax is $\Theta(n)$.

For $\mu=\omega(\log^2 n/\! \log\log n) \cap O(n/\! \log n)$
the $(\mu+1)$ Monte Carlo elitist black-box complexity of \onemax is $\Theta(n/\! \log \mu)$.

There exists a constant $C>1$ such that for $\mu\geq Cn/\! \log n$, the $(\mu+1)$ (Monte Carlo and Las Vegas) elitist black-box complexity is $\Theta(n/\! \log n)$.
\end{theorem}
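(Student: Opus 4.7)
The lower bounds in all three regimes follow directly from Theorem~\ref{thm:lower}, so the task reduces to matching upper bounds. For constant $\mu \geq 2$, I would lift the deterministic $(2+1)$ algorithm of Theorem~\ref{thm:21upper} into the $(\mu+1)$ model by initializing $\mu-2$ of the memory slots with copies of a low-fitness dummy string; the elitist selection then evicts these dummies first, effectively reducing to $(2+1)$ dynamics after $\mu-2$ iterations, and yielding $\Theta(n)$ both in the Monte Carlo and in the Las Vegas sense. For the large-$\mu$ regime $\mu \geq Cn/\!\log n$, I would use the initialization phase of Algorithm~\ref{alg:elitist} itself: sample $\lceil Cn/\!\log n\rceil$ i.i.d.\ uniform random strings as the first initialization draws; by the ranking-based random sampling result of \cite{DoerrW14ranking} (cf.\ Section~\ref{sec:onemaxbackground}) their rankings determine the target $z$ with probability $1-o(1)$, after which one further initialization draw samples $z$ and terminates the run after $O(n/\!\log n)$ function evaluations.

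The meat of the proof is the medium range $\omega(\log^2 n/\!\log\log n) \leq \mu \leq O(n/\!\log n)$. I would choose a block length $m = \Theta(\mu \log \mu)$ so that the ranking-based sampling threshold $t(m) = O(m/\!\log m)$ of \cite{DoerrW14ranking} satisfies $t(m) \leq \mu - c$ for a suitable constant $c$. The bit string is partitioned into a counter region of $O(\log n)$ bits, installed via Lemma~\ref{lem:11counter} (simulated inside the $(\mu+1)$ model by keeping $\mu-1$ low-fitness filler strings that are evicted as the setup progresses), and $N = \lceil n/m \rceil = O(n/(\mu \log \mu))$ blocks of length $m$. The algorithm then processes the blocks one at a time. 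In phase $i$, it maintains a pilot $x^*_i$ that agrees with the target $z$ on blocks $1,\dots,i-1$, and produces $t(m)$ offspring obtained from $x^*_i$ by replacing block $i$ with an independent uniformly random $m$-bit substring, while the neutral counter is incremented. Because offspring and pilot coincide outside block $i$, the \onemax rankings of these offspring are exactly the \onemax rankings of their block-$i$ parts with respect to $z$ restricted to block $i$, and by \cite{DoerrW14ranking} these rankings determine the block with failure probability $O(\sqrt m \exp(-\Delta \sqrt m/\!\log m))$, which is $o(1/N)$ under the assumption $\mu = \omega(\log^2 n/\!\log\log n)$; a union bound then yields overall failure probability $o(1)$. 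Promotion to a new pilot $x^*_{i+1}$ whose block $i$ is set optimally is accepted since it dominates $x^*_i$. Summing over phases gives total cost $N \cdot t(m) + O(1) = O(n/\!\log \mu)$, and the counter bits are finally optimized by the RLS-plus-flag-bit cleanup of Section~\ref{sec:11upperProof} in $O(\log^2 n \log\log n)$ further evaluations.

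The principal obstacle is arranging elitist acceptance of every random offspring produced inside each phase of the medium-$\mu$ case. Conditioned on $x^*_i$, the offspring fitness equals $\OM_z(x^*_i) - (\text{pilot's block-}i\text{ score}) + \mathrm{Bin}(m,1/2)$ and therefore concentrates within $O(\sqrt m)$ of $\OM_z(x^*_i) + m/2 - (\text{pilot's block-}i\text{ score})$. I would maintain the invariant that the current population is the pilot together with a subset of the offspring from the previous phase, whose fitnesses concentrate within $O(\sqrt m)$ of $\OM_z(x^*_{i-1})$. The pilot transition $x^*_{i-1} \to x^*_i$ gains $\Theta(m)$ in fitness (the old block-$(i-1)$ entry was uniform random with expected block score $m/2$, while the optimized entry scores $m$), so the pilot's lead over the surviving old offspring is $\Theta(m) \gg O(\sqrt m)$, which comfortably exceeds the fluctuations and gives per-offspring acceptance failure probability exponentially small in $m$. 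A union bound over $Nt(m)$ offspring keeps the overall failure probability $o(1)$ precisely in the stated range of $\mu$. The remaining pieces (the neutral counter and the counter-region cleanup) are invoked as black boxes from Section~\ref{sec:11upper}.
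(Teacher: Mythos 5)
Your overall strategy for the medium range --- a neutral counter, blocks of length $m=\Theta(\mu\log\mu)$, ranking-based random sampling inside each block, and paying for the elitist acceptance of the random samples in block $i$ with the $\Theta(m)$ fitness gain obtained by writing the optimal entries into block $i-1$ --- is the same as the paper's, and your acceptance analysis for phases $i\ge 2$ is sound. The genuine gap is the \emph{first} block. Your invariant (``the population is the pilot together with a subset of the offspring from the previous phase, concentrated around $\OM_z(x^*_{i-1})$'') has no analogue at $i=1$: there is no preceding pilot promotion to create the $\Theta(m)$ buffer. A phase-$1$ offspring has fitness $\OM_z(x^*_1)-s_1+\mathrm{Bin}(m,1/2)$ with $s_1\sim\mathrm{Bin}(m,1/2)$, so with probability close to $1/2$ it is strictly worse than every member of the initial population and is rejected by elitist selection; the algorithm then cannot recover the full ranking of the $t$ samples that the reconstruction of~\cite{DoerrW14ranking} requires. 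The paper resolves this by treating block $1$ specially: it first optimizes it deterministically with the $(2+1)$ strategy of Theorem~\ref{thm:21upper} and then \emph{inverts} all its bits via Lemma~\ref{lem:copy}, so that its fitness contribution is provably $0$ and every random sample placed there is guaranteed to be accepted. Without this (or an equivalent device, such as planting the $t$ block-$1$ samples during the selection-free initialization of Algorithm~\ref{alg:elitist}) the scheme fails at the very first phase. The last block, which may be shorter than $m$, also needs separate (but easy, e.g.\ $(2+1)$-style) treatment.

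A second, smaller gap concerns the regime $\mu\ge Cn/\!\log n$: your argument yields success probability $1-o(1)$, i.e.\ only the Monte Carlo bound, whereas the theorem also claims a \emph{Las Vegas} bound there, and an elitist algorithm cannot simply restart after a failed reconstruction. The paper obtains the Las Vegas statement by invoking the derandomized Erd\H{o}s--R\'enyi sampling (cf.~\cite{DoerrW14arity}): there is a \emph{fixed} sequence of $O(n/\!\log n)$ strings whose fitness ranking always determines the target, which removes the failure event entirely. (For constant $\mu$ your dummy-string reduction to the $(2+1)$ algorithm matches the paper and is fine, with the implicit understanding that $\mu\ge2$.)
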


\begin{proof}
The lower bounds follow from Theorem~\ref{thm:lower}. For constant $\mu$ the upper bound follows the (2+1) elitist algorithm in Theorem~\ref{thm:21upper}. 
 
The result for $\mu\geq Cn/\! \log n$ follows from the result on the ranking-based black-box complexity in~\cite{DoerrW14ranking}. For the Las Vegas result recall that, as commented in~\cite[Section 3.2]{DoerrW14arity}, the random sampling technique of Erd{\H{o}}s and R{\'e}nyi can be derandomized; that is, there exists a function $t(n)=O(n/\! \log n)$ and sequences $s_1, \ldots, s_{t(n)} \in \{0,1\}^n$ such that the fitness values of these samples uniquely determine the target string of the \onemax function. This, together with the ranking-based strategy of~\cite{DoerrW14ranking} implies the upper bound in the third statement. For the lower bound, a simple information-theoretic argument shows that if the target string is uniformly at random, then with high probability $n/(2\log n)$ samples are not enough to find the optimum~\cite{ErdR63}.

To prove the statement for intermediate values of $\mu$, note that it suffices to show the case $\mu=\omega(\log^2 n/\! \log\log n) \cap O(n/\! \log^2 n)$. The case $\mu' = \omega(n/\! \log^2n) \cap O(n/\! \log n)$ follows from the case $\mu = n/\! \log^2 n$ since the $(\mu'+1)$-complexities can only be smaller than the corresponding $(\mu+1)$-complexities, and $O(n/\! \log \mu) = O(n/\! \log \mu') = O(n/\! \log n)$.

So we may assume that $\mu=\omega(\log^2 n/\! \log\log n) \cap O(n/\! \log^2 n)$. Let $k=\Theta(\mu \log \mu)=\omega(\log^2 n)$ such that for some $t\leq \mu$ the ranking of a random sequence $s_1, \ldots, s_t \in \{0,1\}^k$ induced by the \onemax values of an arbitrary \onemax function $\OM_z$ determines the target string $z$ with probability at least $1-\delta\sqrt{k}\exp(-\Delta\sqrt{k}/\log k)$, $\delta$ and $\Delta$ being the constants implicit in the result of~\cite{DoerrW14ranking}.

\vspace{1.5ex}
\textbf{Setting up the counter: }
The algorithm starts by building a \emph{neutral counter} (a counter as in Lemma~\ref{lem:11counter}) for counting values from one to $\lceil n/k \rceil$. As in previous proofs we denote the counter by $C$. Its length is $O(\log n)$.

We initialize the algorithm by sampling the string with all zeros in the first $|C|+3$ positions and random entries in the remaining positions. 
We place $C$ in the positions $\{4,\ldots, |C|+3\}$, and the optimal entries of $C$ will be copied into part $C'$, which is placed in positions $\{|C|+4,\ldots, 2|C|+3\}$. 
First we use the $(2+1)$ linear optimization strategy from Theorem~\ref{thm:21upper} to optimize part $C$. 
This requires $O(|C|)=O(\log n)$ (deterministic) iterations. 
At the end of this phase we set the first bit to one, indicating that we are  now ready to copy $C$ into $C'$. We do so by applying the strategy from Lemma~\ref{lem:copy} with $B:=\{2|C|+4, \ldots, n\}$.

When $C$ is copied into $C'$ we flip the second flag bit and continue by initializing the counter. This requires to flip $|C|/2$ bits from the correct into their non-optimal state. Again we apply Lemma~\ref{lem:copy} with $B$ as above. Note that by Chernoff's bound, $B$ satisfies the requirements of Lemma~\ref{lem:copy} with high probability.

By comparison of $C$ with $C'$ we recognize when the counter is initialized. We are then ready to enter the main part of the algorithm in which we optimize part $B$. 
Note that at this point the first three bits are $110$. 
Note further that at most $O(\log n)$ bits in $B$ have been touched at this point, so that, as also commented in the proof of Theorem~\ref{thm:11upper} in Section~\ref{sec:11upperProof}, by Chernoff's bound,
with probability at least $1-\exp(-\eps^2n/3)$, after this copy operation at least a $1/2-\eps$ fraction of $B$ is non-optimized, for any constant $\eps>0$. The $-\eps^2n/3$ part (en lieu of the typical $-\eps^2n/2$ expression) in this bound accounts for the fact that $O(\log n)$ bits have been optimized during the implementation and initialization of the counter and the fact that we regard the substring $B$ of size only $n-O(\log n)$.

\vspace{1.5ex}
\textbf{Optimization of the Main Part Using Random Sampling: }
We divide part $B$ into blocks of length $k$ each; only the last block, which will be treated differently, may have smaller size.
We aim at optimizing the blocks iteratively. 

To this end, we first show that with high probability we can for each of the $\lceil |B|/k \rceil$ blocks determine the target entries in the block from the $t$ random samples. Recall that for each block individually this probability is $1-\delta\sqrt{k}\exp(-\Delta\sqrt{k}/\log k)$. By a union bound, the probability that it works for all $\lceil |B|/k \rceil$ blocks is thus at least 
$1-\delta (n/\sqrt{k}) \exp(-\omega(\sqrt{\log^2 n})) = 1-O(n^{-c})$ for any positive constant $c$. 

Fix $0< \eps,\eps'< 1/6$. We show next that with high probability the fitness contribution of each block (except for, potentially, the last one, which we can and do ignore in the following) is between $(1/2-\eps)k$ and $(1/2+\eps)k$ initially.
After initialization of the algorithm, the expected fitness contribution of each block is $1/2$ times the length of the block, i.e., $k/2$. 
During the setup and the initialization of the counter, we have changed at most $O(\log n)$ bits in $B$, their positions being uniformly distributed in $B$. 
Therefore, each block has an expected fitness contribution after the setup and initialization of the counter of $(1-o(1))k/2$. By Chernoff's bound, its contribution is between the desired $(1/2-\eps)k$ and $(1/2+\eps)k$ with probability at least $1-2\exp(-\delta k)$ for some positive constant $\delta$. 
By a union bound, the fitness contribution of every (but potentially the last) block is thus between $(1/2-\eps)k$ and $(1/2+\eps)k$ with probability at least $1-2(n/k)\exp(-\delta k)$.
Together with the requirement $k=\omega(\log^2 n)$ this shows that the failure probability is at most $n^{-c}$ for any positive constant $c$. 
We may therefore condition all the following statements on this event. 

By the same reasoning as above, the probability that for all blocks $i$ and for all $j \in [t]$ the fitness contribution of the random string $s_j$ in block $i$ is between $(1/2-\eps')k$ and $(1/2+\eps')k$ is at least $1-(n/\! \log k)\exp(-\delta' k)$, for some positive constant $\delta'$. As above, this expression is at least $1-n^{-c}$ for any positive constant $c$. We may therefore also condition on this event. 

Let us assume that for some block $1 \leq i<\lceil |B|/k \rceil-1$ we have sampled the required $t$ random strings. We show how to optimize block $i+1$. (The optimization of the first and the last block needs to be handled differently and will be considered below.) 
That is, the entries of the first $i-1$ blocks are already optimized, the counter of the $\mu$ strings in the population is set to $i$ and the entries in the $i$th block of $t$ of these strings are taken from $\{0,1\}^k$ uniformly at random.\footnote{In more precision, one substring is the \emph{median query} required for the ranking-based algorithm from~\cite{DoerrW14ranking}. See Lemma 12 in~\cite{DoerrW14ranking} for the details of this query, which is needed to verify that the fitness level $k/2$ is correctly identified. It is only important for us to know that we need to make one additional non-random query, the fitness contribution of which is $\lceil k/2 \rceil$ with very high probability. We ignore this query in this presentation, as it is obvious that it does not create any problems with our approach.}
The next $t$ queries are as follows. In each query, we replace the entries in the $i$-th block by the optimal ones, we increase the neutral counter by one, and we replace the entries in the $(i+1)$-st block by entries that are taken from $\{0,1\}^k$ independently and uniformly at random. Let us first argue that these queries will be accepted into the population. 
When we replace the initial entries in block $i+1$ by the random string $s_j$ we lose a fitness contribution of at most $(\eps+\eps')k$. 
On the other hand, we have a fitness increase of at least $(1/2-\eps')k>(\eps+\eps')k$ from replacing the random entries in the $i$th block by the optimal ones. The neutral counter does not have any effect on the fitness and can thus be ignored. 

It remains to describe how to optimize the first and the last block.
For the last block, we simply use the (2+1) linear elitist optimization strategy of Theorem~\ref{thm:21upper}. Since the size of this block is at most $k=\Theta(\mu \log \mu) = O(n/\! \log \mu)$, this does not affect the overall runtime by more than a constant factor. Of course, we increase in each query for the last block the neutral counter by one and we replace the random strings in the penultimate block by the optimal ones. 

Getting the desired random samples into the first block is a bit more challenging. We need to respect the elitist selection rule and need thus to make sure that the random samples in the first block are accepted. A simple trick enables us to guarantee that. We first optimize the first block with the linear (2+1) elitist strategy from Theorem~\ref{thm:21upper} (note again that the size of the block is $k=O(n/\! \log \mu)$). We then invert all the bits in the first block by applying the strategy from Lemma~\ref{lem:copy} to the first block and part $B$. Since this affects at most $k=O(n/\! \log \mu)=o(n)$ bits in $B$, all the probabilistic statements that we made about $B$ above still hold. At this point the fitness contribution of the first block is zero and all random samples in the first block can thus be accepted. 

There are $\lceil |B|/k \rceil=O(n/k)$ blocks in total. For each block we sample $t=O(k/\log k)$ random strings to determine the optimum. The overall number of samples performed in this phase of the algorithm is thus 
$O(n/\! \log k) = O(n/\! \log \mu)$. 

\vspace{1.5ex}
\textbf{Optimizing the Counter: }
Once all the blocks have been optimized, that is, as soon as $s_{t}$ has been sampled in the last block, we sample the search point which replaces $s_{t}$ by the optimal entries for this block and which has the third bit set to one (it has been zero in all previous iterations). 
This indicates that we can now go to the next phase, in which we optimize all the bits in positions $\{1,2\} \cup \{4, \ldots, 2|C|+3\}$ using the linear (2+1) elitist strategy from Theorem~\ref{thm:21upper}. Finally, we check if replacing the third bit by a zero improves the fitness further. 
This last phase is deterministic and requires $O(|C|)=O(\log n)$ queries. 
\end{proof}

\section{Remark on \texorpdfstring{$(\mu,\lambda)$}{(m,l)} Elitist Black-Box Complexities}
\label{sec:comma}

It is interesting to note that it can be significantly easier in the elitist black-box model to optimize a function when allowed to use so-called \emph{comma} strategies instead of the \emph{plus} strategies described by Algorithm~\ref{alg:elitist}. 
To make things formal, we call an algorithm that follows the scheme of Algorithm~\ref{alg:elitist} with Line~8 replaced by 
\[\text{Set $X \assign \{y^{(1)},\ldots,y^{(\lambda)}\}$}
\] 
and Line~9 running only to $\lambda-\mu$ a $(\mu,\lambda)$ elitist algorithm.
That is, a $(\mu,\lambda)$ elitist algorithm has to keep in each iteration the $\mu$ best sampled offspring, but it is allowed (and forced) to ignore the parent solutions (which, consequently, can be of better fitness). As mentioned in the introduction, the term \emph{elitist selection} may not be appropriate here, depending on the context, and \emph{truncation selection} may be the preferable expression. In any case, if the algorithm wants to maintain parts of the parental population, it can simply resample those individuals that should be kept.

Note in particular that $(\mu,\lambda)$ elitist algorithms can do restarts. Therefore, as discussed in Section~\ref{sec:lasvegas}, to bound the Las Vegas complexity of $(\mu,\lambda)$ elitist algorithms, it suffices to bound its Monte Carlo complexity. Note further that for all $\lambda'$ with $\mu+\lambda' \leq \lambda$ we can imitate every $(\mu+\lambda')$ elitist black-box algorithm by a $(\mu,\lambda)$ elitist black-box algorithm, from Theorems~\ref{thm:11upper} and~\ref{thm:1lambdaupper} we get the following corollary. 

\begin{corollary}
\label{cor:comma}
The (1,2) (Las Vegas and Monte Carlo) elitist black-box complexity of \onemax is at most $O(n)$. 

For any $\lambda\geq 2$, there are $(1,\lambda)$ (Las Vegas and Monte Carlo) elitist black-box algorithms that need at most $O(n/\! \log \lambda)$ generations on \onemax.
\end{corollary}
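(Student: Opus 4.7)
The plan rests on the simulation remark stated immediately before the corollary: whenever $\mu+\lambda'\le \lambda$, a $(\mu,\lambda)$ elitist algorithm can emulate any $(\mu+\lambda')$ elitist algorithm. First I would spell this out. In each generation the $(\mu,\lambda)$ emulator produces $\mu$ exact copies of its parents (which is allowed since the parents sit in memory and can be sampled deterministically from it), then $\lambda'$ genuine offspring drawn according to the rule of the simulated algorithm, and fills the remaining $\lambda-\mu-\lambda'$ slots with further parent copies. Truncation selection of the $\mu$ best among these $\lambda$ offspring returns exactly the $\mu$ best elements of $(\text{parents})\cup(\text{new offspring})$, so the next parent population is distributed identically to that of the simulated $(\mu+\lambda')$ algorithm. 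The number of generations is unchanged, and the per-generation query cost grows by at most the constant factor $\lambda/(\mu+\lambda')$. The emulator uses only search points already in memory and only the ranking information its host provides, so it respects the memory-restricted ranking-based constraints as well as elitism.

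Taking $\mu=1$, $\lambda'=1$, $\lambda=2$ reduces the (1,2) Monte Carlo bound to Theorem~\ref{thm:11upper}, giving $O(n)$ queries. Taking $\mu=1$, $\lambda'=\lambda-1$ reduces the $(1,\lambda)$ Monte Carlo bound with $\lambda\ge 3$ to Theorem~\ref{thm:1lambdaupper}, giving $O(n/\!\log(\lambda-1))=O(n/\!\log\lambda)$ generations; the missing case $\lambda=2$ is already covered by the (1,2) bound.

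For the Las Vegas part I would use the observation highlighted in Section~\ref{sec:comma}: because a comma algorithm necessarily discards its parents at each generation, it can genuinely restart at any point by simply sampling $\lambda$ uniformly random offspring and continuing from there. The plan is to run the above Monte Carlo algorithm in windows of length $T=\Theta(n/\!\log\lambda)$ generations, and to restart whenever one window completes without an optimum being queried. Since the Monte Carlo success probabilities from Theorems~\ref{thm:11upper} and~\ref{thm:1lambdaupper} are $1-o(1)$, the expected number of restarts is $O(1)$ and the expected total number of generations is $O(T)$. Note that the algorithm does not need to \emph{detect} success; since black-box runtime is the first time an optimum is queried, it suffices that some attempt samples the optimum within its budget, which happens with probability $1$ eventually.

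I do not anticipate a serious obstacle. The simulation step is essentially bookkeeping, and the restart argument rests entirely on the fact that a comma strategy is not tied to its own history. The only detail I would double-check is that the parent-copy slots really are neutral with respect to the truncation comparison, which is immediate: copies inherit their parent's fitness, so including more of them neither helps nor hurts a genuinely superior or inferior new offspring.
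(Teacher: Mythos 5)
Your proposal is correct and follows essentially the same route as the paper: the $(\mu,\lambda)$ model simulates any $(\mu+\lambda')$ elitist algorithm with $\mu+\lambda'\le\lambda$ by resampling (copying) the parents into the offspring population, so Theorems~\ref{thm:11upper} and~\ref{thm:1lambdaupper} transfer directly, and the Las Vegas claim follows because comma strategies can restart, exactly as the paper argues via Section~\ref{sec:lasvegas}. The only detail worth noting is that a ``restart'' must reproduce the simulated algorithm's initialization distribution, which you can ensure by sampling the $\lambda$ offspring as identical copies of one draw from that distribution rather than as independent uniform points; this does not change your argument.
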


Asymptotically, these bounds are tight, since matching lower bounds can be obtained by the same information-theoretic arguments as used in Theorem~\ref{thm:lower}. We can easily improve the bounds in Corollary~\ref{cor:comma} as follows.
\begin{theorem}
\label{thm:11comma}
The (1,2) Las Vegas elitist black-box complexity of \onemax is at most $2n+1$, and the correspnding algorithm needs at most $n+1$ generations. 

For any $\lambda \geq 2$ there are $(1,\lambda)$ Las Vegas and Monte Carlo elitist black-box algorithms that need at most $\lceil n/\! \lfloor\log_2 \lambda \rfloor \rceil$ generations on \onemax.
\end{theorem}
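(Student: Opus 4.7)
The plan is to exploit the full freedom that the comma rule affords: since the parent must be discarded in favour of a newly sampled offspring every generation, we can rebuild the population from scratch and so use it as a scratchpad encoding both the learned prefix of $z$ and an implicit iteration counter. The resulting algorithms will be deterministic, and hence certify the Las Vegas and every Monte Carlo bound simultaneously.

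For the $(1,2)$ claim, I would maintain, for $i = 0,1,\ldots,n-1$, the invariant that the parent after generation $i$ has the form $(z_1,\ldots,z_i,1,0,\ldots,0)$: the first $i$ bits match $z$, position $i+1$ carries a ``marker'' $1$, and the rest is $0$. The initial individual $(1,0,\ldots,0)$ realises this invariant for $i=0$; and since everything to the right of the marker is zero, the marker is exactly the rightmost one-bit, so the current iteration index can be read off from the parent alone. For $i \leq n-2$, generation $i+1$ samples the two offspring $(z_1,\ldots,z_i,0,1,0,\ldots,0)$ and $(z_1,\ldots,z_i,1,1,0,\ldots,0)$. They differ only in position $i+1$, so the fitter one carries $z_{i+1}$ there and equals $(z_1,\ldots,z_{i+1},1,0,\ldots,0)$, preserving the invariant with the marker shifted by one. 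The final generation is a special case: from $(z_1,\ldots,z_{n-1},1)$ I sample $(z_1,\ldots,z_{n-1},0)$ and $(z_1,\ldots,z_{n-1},1)$, whose fitter element is $z$ itself. Counting the initial sample, this costs $2n+1$ queries across $n+1$ generations.

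For the $(1,\lambda)$ claim I would scale the scheme up, grouping the bits into blocks of size $t := \lfloor \log_2 \lambda \rfloor$ and optimising one block per generation via exhaustive search. The invariant becomes: after generation $k$ the parent equals $(z_1,\ldots,z_{kt},1,0,\ldots,0)$ with the marker at position $kt+1$. In generation $k+1$, I sample the $2^t \leq \lambda$ offspring that agree with the parent on the first $kt$ bits, range over all $2^t$ bit patterns on positions $kt+1,\ldots,(k+1)t$, and carry the new marker at position $(k+1)t+1$ (with zeros afterwards); the remaining $\lambda - 2^t$ slots are filled with duplicates of one of these structured offspring. All $2^t$ structured offspring agree outside block $k+1$, so the one matching $z$ on that block is strictly fittest among the whole sample, becomes the new parent, and preserves the invariant.

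The only delicate step is the final (possibly partial) block: if $t$ divides $n$, the $(n/t)$-th generation optimises the last block without installing a marker, while if $r := n - t\lfloor n/t\rfloor > 0$ an extra generation exhaustively searches the $2^r \leq \lambda$ patterns on the last $r$ positions. Either way the number of iterative generations is $\lceil n/t \rceil$, as claimed. I do not expect a deep obstacle — the main subtlety beyond bookkeeping is realising that the spare offspring slots must be filled with duplicates of a structured offspring: a truly arbitrary ``junk'' sample could otherwise have higher OneMax-value than the exhaustive-search winner and destroy the invariant.
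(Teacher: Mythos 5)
Your proposal is correct and takes essentially the same route as the paper: both use a marker one-bit with zeros to its right as an implicit iteration counter, resolve each bit via two offspring differing in exactly one position in the $(1,2)$ case, and scale up to exhaustive search over blocks of size $\lfloor\log_2\lambda\rfloor$ for general $\lambda$. Your extra care about the final generation and about filling the spare $\lambda-2^{t}$ offspring slots with duplicates of a structured offspring only makes explicit details the paper leaves implicit.
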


\begin{proof}
We first regard the (1,2) situation.
Initialize the algorithm with the string $x_1=(1,0,\ldots,0)$. 
We maintain the following invariant: at the beginning of iteration $t$ the string $x_t$ in the memory has entry 1 in position $t$ and zeros in all positions $i \geq t$.

We sample in iteration $t$ the two search points $x_t \oplus e_{t+1}$ and $x_t \oplus e_t \oplus e_{t+1}$, where for all $j \in [n]$ the string $e_j$ is the string with all entries except the $j$-th one set to zero. That is, we either flip only the $t+1$-st bit in $x_t$ or we flip both the $t$-th and the $t+1$-st bit. Since the two offspring differ in exactly one position, one of them has strictly better fitness than the other, and we (necessarily) keep the better one. At the end of the $t$-th iteration the search point in the memory is thus optimized in the first $t$ positions. After the $n$-th iteration, the optimum is found. 

For the $(1,\lambda)$ situation we simply apply the previous idea with an exhaustive search on blocks of length $\ell:=\lfloor\log_2 \lambda \rfloor$. That is, we always move the one by $\ell$ positions to the right while at the same time testing all possible $2^{\ell} \leq \lambda$ possible entries in these $\ell$ positions.

Both algorithms are deterministic and therefore Las Vegas.
\end{proof}

For completeness, we note that the (1,1) (Las Vegas or Monte Carlo) complexity of \onemax is $\Theta(2^n)$. The Las Vegas upper bound is given by random sampling, and it implies the Monte Carlo upper bound as discussed in Section~\ref{sec:lasvegas}. For the lower bound, note that the algorithm does not get any information about the search point it stores, except whether it is the optimum. Therefore, the problem is at least as hard as the needle-in-haystack problem \textsc{Needle} where all search point except the optimum have the same fitness. Even if we give the algorithm access to infinite memory, for any $0<c<1$ after $c2^{n}$ steps the optimum of \textsc{Needle} will not be found with probability at least $1-c$, proving the lower bounds.

\section{Conclusions}
\label{sec:conclusions}
We have analyzed black-box complexities of \onemax with respect to $(\mu+\lambda)$ memory-restricted ranking-based algorithms. Moreover, we have shown that the complexities do not change if we also require the algorithms to be elitist, provided that we regard Monte-Carlo complexities. For different settings of $\mu$ and $\lambda$ we have seen that such algorithms can be fairly efficient and attaining the information-theoretic lower bounds. 

An interesting open question arising from our work is a tight bound for the Las Vegas complexity of \onemax in the (1+1) elitist black-box model. We have sketched in Section~\ref{sec:11upper} the main difficulties in turning our Monte Carlo algorithm into a Las Vegas heuristic. The possible discrepancy between these two notions also raises the question which problems can be optimized substantially more efficiently with restarts than without, an aspect for which some initial findings can be found in the literature, e.g.,~\cite{Jansen02}, but for which no strong characterization exists. 

\subsection*{Acknowledgments}
This research benefited from the support 
of the ``FMJH Program Gaspard Monge in optimization and operation research'', 
and from the support to this program from EDF.}


\newcommand{\etalchar}[1]{$^{#1}$}
\providecommand{\bysame}{\leavevmode\hbox to3em{\hrulefill}\thinspace}
\providecommand{\MR}{\relax\ifhmode\unskip\space\fi MR }
\providecommand{\MRhref}[2]{%
  \href{http://www.ams.org/mathscinet-getitem?mr=#1}{#2}
}
\providecommand{\href}[2]{#2}

\end{document}